\newtheorem{theorem}{Theorem}
\newtheorem{definition}{Definition}
\newtheorem{corollary}{Corollary}
\newtheorem{proposition}{Proposition}
\newcommand{\eg}{\emph{e.g.}}
\newcommand{\ie}{\emph{i.e.}}
\newcommand\Tstrut{\rule{0pt}{2.6ex}}
\begin{document}
%
\title{Boosted Generative Models}
\author{Aditya Grover, Stefano Ermon\\
Computer Science Department\\
Stanford University\\
\texttt{\{adityag, ermon\}@cs.stanford.edu}\\
}
\maketitle
\begin{abstract}
We propose a novel approach for using unsupervised boosting to create an ensemble of generative models, where models are trained in sequence to correct earlier mistakes. Our meta-algorithmic framework can leverage any existing base learner that permits likelihood evaluation, including recent deep expressive models. Further, our approach allows the ensemble to include discriminative models trained to distinguish real data from model-generated data. We show theoretical conditions under which incorporating a new model in the ensemble will improve the fit and empirically demonstrate the effectiveness of our black-box boosting algorithms on density estimation, classification, and sample generation on benchmark datasets for a wide range of generative models.
\end{abstract}

\section{Introduction}\label{sec:intro}
 A variety of deep generative models have shown promising results on tasks spanning computer vision, speech recognition, natural language processing, and imitation learning~\cite{poon2011sum,oord2016pixel,kingma-iclr2014,goodfellow2014generative,zhao2017learning,li2017inferring}.
These parametric models differ from each other in their ability to perform various forms of tractable inference, learning algorithms, and objectives.
Despite significant progress, existing generative models cannot fit complex distributions with a sufficiently high degree of accuracy, limiting their applicability and leaving room for 
improvement.

In this paper, we propose a technique for ensembling (imperfect) generative models to improve their overall performance.
Our meta-algorithm is inspired by boosting, a technique used in supervised learning to
combine weak classifiers (\eg, decision stumps or trees), which individually might not perform well on a given classification task, into a more powerful ensemble.
The boosting algorithm will attempt to learn a classifier to correct for the mistakes made by reweighting the original dataset, and repeat this procedure recursively. Under some conditions on the weak classifiers' effectiveness, this procedure can drive the (training) error to zero~\cite{freund1999short}. Boosting can also be thought as a feature learning algorithm, where at each round a new feature is learned by training a classifier on a reweighted version of the original dataset. 
In practice, algorithms based on boosting perform extremely well in machine learning competitions~\cite{caruana2006empirical}.

We show that a similar procedure can be applied to generative models. Given an initial generative model that provides an imperfect fit to the data distribution, we construct a second model to correct for the error, and repeat recursively. The second model is also a generative one, which is trained on a reweighted version of the original training set. Our meta-algorithm is general and can construct ensembles of any existing generative model that permits (approximate) likelihood evaluation
such as fully-observed belief networks, sum-product networks, and variational autoencoders.
Interestingly, our method can also leverage powerful discriminative models. 
Specifically, we train a binary classifier to distinguish true data samples from ``fake'' ones generated by the current model and provide a principled way to include this discriminator in the ensemble. 

A prior attempt at boosting density estimation proposed a \textit{sum-of-experts} formulation~\cite{rosset2002boosting}. This approach is similar to supervised boosting where at every round of boosting we derive a reweighted additive estimate of the boosted model density. 
In contrast, our proposed framework uses multiplicative boosting which multiplies the ensemble model densities and can be interpreted as a \textit{product-of-experts} formulation. 
We provide a holistic theoretical and algorithmic framework for multiplicative boosting contrasting with competing additive approaches. Unlike prior use cases of product-of-experts formulations, our approach is \textit{black-box}, and we empirically test the proposed algorithms on several generative models from simple ones such as mixture models to expressive parameteric models such as sum-product networks and variational 
autoencoders.

Overall, this paper makes the following contributions:
\begin{enumerate}
\item We provide theoretical conditions for additive and multiplicative boosting under which incorporating a new model is guaranteed to improve the ensemble fit.
\item We design and analyze a flexible meta-algorithmic boosting framework for including both generative and discriminative models in the ensemble.
\item We demonstrate the empirical effectiveness of our algorithms for density estimation, generative classification, and sample generation on several benchmark datasets. 
\end{enumerate}

\section{Unsupervised boosting}\label{sec:theory}
Supervised boosting provides an algorithmic formalization of the hypothesis that a sequence of weak learners can create a single strong learner~\cite{schapire2012boosting}. 
Here, we propose a framework that extends boosting to unsupervised settings for learning generative models. For ease of presentation, all distributions are with respect to any arbitrary  $\mathbf{x} \in \mathbb{R}^d$, unless otherwise specified. We use upper-case symbols to denote probability distributions and assume they all admit absolutely continuous densities (denoted by the corresponding lower-case notation) on a reference measure $\mathrm{d}\mathbf{x}$. Our analysis naturally extends to discrete distributions, which we skip for brevity. 

Formally, we consider the following maximum likelihood estimation (MLE) setting. Given some  data points $X=\{\mathbf{x}_i \in \mathbb{R}^d\}_{i=1}^{m}$ sampled i.i.d. from an unknown distribution $P$, we provide a model class $\mathcal{Q}$ parameterizing the distributions that can be represented by the generative model and minimize the Kullback-Liebler (KL) divergence with respect to the true distribution:
\begin{align}\label{eq:MLE_objective}
\min_{Q \in \mathcal{Q}} D_{KL}(P \Vert Q).
\end{align}

In practice, we only observe samples from $P$ and hence, maximize the log-likelihood of the observed data $X$. Selecting the model class for maximum likelihood learning is non-trivial; MLE w.r.t. a small class can be far from $P$, whereas a large class poses the risk of overfitting in the absence of sufficient data, or even underfitting due to difficulty in optimizing non-convex objectives that frequently arise due to the use of latent variable models, neural networks, etc.

The boosting intuition is to greedily increase model capacity by learning a sequence of weak intermediate models $\{h_t \in \mathcal{H}_t\}_{t=0}^T$ that can correct for mistakes made by previous models in the ensemble. Here, $\mathcal{H}_t$ is a predefined model class (such as $\mathcal{Q}$) for $h_t$. We defer the algorithms pertaining to the learning of such intermediate models to the next section, and first discuss two mechanisms for deriving the final estimate $q_T$ from the individual density estimates at each round, $\{h_t\}_{t=0}^T$.

\subsection{Additive boosting}
In additive boosting, the final density estimate is an arithmetic average of the intermediate models:
\begin{align*}
q_T = \sum_{t=0}^T \alpha_t \cdot h_t
\end{align*}
where $0 \leq \alpha_t\leq 1$ denote the weights assigned to the intermediate models. The weights are re-normalized at every round to sum to 1 which gives us a valid probability density estimate. Starting with a base model $h_0$, we can express the density estimate after a round of boosting recursively as:
\begin{align*}
q_t = (1-\hat{\alpha}_t) \cdot q_{t-1} + \hat{\alpha}_t \cdot h_t
\end{align*}
where $\hat{\alpha}_t$ denotes the normalized weight for $h_t$ at round $t$. We now derive conditions on the intermediate models that guarantee ``progress'' in every round of boosting.

\begin{theorem}\label{thm:add_KL_red}
Let \small${\delta^t_{KL}(h_t, \hat{\alpha}_t) = D_{KL}(P \Vert Q_{t-1}) - D_{KL}(P \Vert Q_t)}$\normalsize{} denote the reduction in KL-divergence at the $t^{th}$ round of additive boosting. The following conditions hold:
 \begin{enumerate}
 \item Sufficient: If $\mathbb{E}_P \left[\log \frac{h_t}{q_{t-1}}\right] \geq 0$, then $\delta^t_{KL}(h_t, \hat{\alpha}_t) \geq 0$ for all
 $\hat{\alpha}_t \in [0,1]$.
 \item Necessary: If $\exists \hat{\alpha}_t \in (0, 1]$ such that $\delta^t_{KL}(h_t, \hat{\alpha}_t) \geq 0$, then $\mathbb{E}_P \left[\frac{h_t}{q_{t-1}}\right] \geq 1$.
 \end{enumerate}
\end{theorem}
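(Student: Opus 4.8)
The plan is to first reduce the single-round change in KL-divergence to a statement about the likelihood ratio $h_t/q_{t-1}$. Since $D_{KL}(P\Vert Q_{t-1}) - D_{KL}(P\Vert Q_t) = \mathbb{E}_P[\log p - \log q_{t-1}] - \mathbb{E}_P[\log p - \log q_t] = \mathbb{E}_P[\log(q_t/q_{t-1})]$, substituting the additive recursion $q_t = (1-\hat\alpha_t)q_{t-1} + \hat\alpha_t h_t$ gives
\[
\delta^t_{KL}(h_t,\hat\alpha_t) = \mathbb{E}_P\!\left[\log\!\left((1-\hat\alpha_t) + \hat\alpha_t\,\frac{h_t}{q_{t-1}}\right)\right].
\]
Writing $r := h_t/q_{t-1}$ and $\alpha := \hat\alpha_t$, the integrand is $\log\big((1-\alpha)\cdot 1 + \alpha\cdot r\big)$, an affine interpolation between $1$ and $r$ sitting inside the concave function $\log$; both parts of the theorem then come from using Jensen's inequality along the two different ``axes'' of this expression.

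For the sufficient condition, I would exploit concavity in the mixing weight $\alpha$: pointwise (for each value of $r \ge 0$), $\log\big((1-\alpha)\cdot 1 + \alpha\cdot r\big) \ge (1-\alpha)\log 1 + \alpha\log r = \alpha\log r$. Taking $\mathbb{E}_P$ on both sides yields $\delta^t_{KL}(h_t,\hat\alpha_t) \ge \alpha\,\mathbb{E}_P[\log(h_t/q_{t-1})]$, so the hypothesis $\mathbb{E}_P[\log(h_t/q_{t-1})] \ge 0$ immediately forces $\delta^t_{KL}(h_t,\hat\alpha_t) \ge 0$ for every $\alpha \in [0,1]$ (the case $\alpha = 0$ being trivial, since both sides vanish).

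For the necessary condition, I would instead apply Jensen along the data randomness $\mathbf{x}\sim P$: concavity of $\log$ gives $\mathbb{E}_P[\log(1-\alpha+\alpha r)] \le \log\mathbb{E}_P[1-\alpha+\alpha r] = \log\big(1-\alpha+\alpha\,\mathbb{E}_P[h_t/q_{t-1}]\big)$. Hence if some $\alpha \in (0,1]$ achieves $\delta^t_{KL}(h_t,\alpha) \ge 0$, then $\log\big(1-\alpha+\alpha\,\mathbb{E}_P[h_t/q_{t-1}]\big) \ge 0$, i.e.\ $1-\alpha+\alpha\,\mathbb{E}_P[h_t/q_{t-1}] \ge 1$, and dividing through by $\alpha > 0$ gives $\mathbb{E}_P[h_t/q_{t-1}] \ge 1$.

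I do not anticipate a genuine obstacle; the computation is short once the KL difference is rewritten. The points requiring a little care are purely measure-theoretic: ensuring the relevant expectations are well defined (the absolute-continuity assumptions make the log-ratios finite $P$-a.s., and under the sufficient hypothesis $h_t > 0$ $P$-a.s., since otherwise $\mathbb{E}_P[\log(h_t/q_{t-1})] = -\infty$), and noting that no separate renormalization of $q_t$ is needed because a convex combination of densities with $\hat\alpha_t\in[0,1]$ is already a density. It is also worth remarking that the gap between ``sufficient'' and ``necessary'' --- an expected-log-ratio condition versus an expected-ratio condition --- is exactly the slack introduced by the single application of Jensen in each direction, so the two conditions cannot in general be collapsed into one.
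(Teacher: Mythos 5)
Your proposal is correct and follows essentially the same route as the paper: rewrite $\delta^t_{KL}$ as $\mathbb{E}_P\left[\log\left((1-\hat\alpha_t) + \hat\alpha_t \frac{h_t}{q_{t-1}}\right)\right]$, use concavity of $\log$ pointwise (weighted AM--GM) for the sufficient direction, and Jensen's inequality over $P$ followed by exponentiation and division by $\hat\alpha_t>0$ for the necessary direction. Your added remarks on measure-theoretic well-definedness and on the Jensen slack separating the two conditions are sound but not needed beyond what the paper does.
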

\begin{proof}
In Appendix~\ref{proof:add_KL_red}.
\end{proof}

The sufficient and necessary conditions require that the expected log-likelihood and likelihood respectively of the current intermediate model, $h_t$ are better-or-equal than those of the combined previous model, $q_{t-1}$ under the true distribution when compared using density ratios. 
Next, we consider an alternative formulation of multiplicative boosting for improving the model fit to an arbitrary data distribution.

\subsection{Multiplicative boosting}

In multiplicative boosting, we factorize the final density estimate as a geometric average of  $T+1$ intermediate models $\{h_t\}_{t=0}^T$, each assigned an exponentiated weight $\alpha_t$:
\begin{align*}
q_T = \frac{\prod_{t=0}^T h_t^{\alpha_t}}{Z_T}
\end{align*}
where the partition function $Z_T = \int \prod_{t=0}^T h_t^{\alpha_t} \,\mathrm{d}\mathbf{x}$. 
Recursively, we can specify the density estimate as:
\begin{align}\label{eq:q_t}
\tilde{q}_t &= h_t^{\alpha_t} \cdot \tilde{q}_{t-1}
\end{align}
where $\tilde{q}_t$ is the unnormalized estimate at round $t$. The base model $h_0$ is learned using MLE. The conditions on the intermediate models for reducing KL-divergence at every round are stated below.

\begin{theorem}\label{thm:KL_red}
Let \small${\delta^t_{KL}(h_t, \alpha_t) = D_{KL}(P \Vert Q_{t-1}) - D_{KL}(P \Vert Q_t)}$\normalsize{} denote the reduction in KL-divergence at the $t^{th}$ round of multiplicative boosting. The following conditions hold:
 \begin{enumerate}
 \item Sufficient: If $\mathbb{E}_P [\log h_t] \geq \log \mathbb{E}_{Q_{t-1}}[h_t]$, then $\delta^t_{KL}(h_t, \alpha_t) \geq 0$ for all
 $\alpha_t \in [0, 1]$.
 \item Necessary: If $\exists \alpha_t \in (0, 1]$ such that $\delta^t_{KL}(h_t, \alpha_t) \geq 0$, then $\mathbb{E}_P [\log h_t] \geq \mathbb{E}_{Q_{t-1}}[\log h_t]$.
 \end{enumerate}
\end{theorem}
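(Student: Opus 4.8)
The plan is to collapse the whole question to a one-dimensional concavity argument in $\alpha_t$. First I would unwind the recursion \eqref{eq:q_t}: writing each normalized density as $q_s = \tilde q_s / Z_s$ with $Z_s = \int \tilde q_s\,\mathrm{d}\mathbf{x}$, the ratio of consecutive iterates telescopes, since $Z_t/Z_{t-1} = \int h_t^{\alpha_t} q_{t-1}\,\mathrm{d}\mathbf{x} = \mathbb{E}_{Q_{t-1}}[h_t^{\alpha_t}]$, to $q_t/q_{t-1} = h_t^{\alpha_t}/\mathbb{E}_{Q_{t-1}}[h_t^{\alpha_t}]$. Because $D_{KL}(P\Vert Q_{t-1}) - D_{KL}(P\Vert Q_t) = \mathbb{E}_P[\log q_t - \log q_{t-1}]$ (the common $\mathbb{E}_P[\log p]$ terms cancel), this gives the closed form $\delta^t_{KL}(h_t,\alpha_t) = f(\alpha_t)$, where I set $f(\alpha) := \alpha\,\mathbb{E}_P[\log h_t] - \log \mathbb{E}_{Q_{t-1}}[h_t^{\alpha}]$.

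Next I would record two facts about $f$ on $[0,1]$: (i) $f(0) = 0$, since $h_t^0 \equiv 1$ integrates to $1$ under $Q_{t-1}$; and (ii) $f$ is concave, because $\alpha \mapsto \log \mathbb{E}_{Q_{t-1}}[h_t^{\alpha}] = \log \mathbb{E}_{Q_{t-1}}[e^{\alpha \log h_t}]$ is the cumulant generating function of $\log h_t$ under $Q_{t-1}$ and hence convex (by H\"older's inequality, or by noting its second derivative is a variance under the tilted measure), so subtracting it from the linear term leaves a concave function of $\alpha$.

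Both parts then follow in a couple of lines. For sufficiency, the hypothesis $\mathbb{E}_P[\log h_t] \geq \log \mathbb{E}_{Q_{t-1}}[h_t]$ is precisely $f(1)\geq 0$, and concavity gives $f(\alpha_t) = f\big(\alpha_t\cdot 1 + (1-\alpha_t)\cdot 0\big) \geq \alpha_t f(1) + (1-\alpha_t) f(0) = \alpha_t f(1) \geq 0$ for every $\alpha_t\in[0,1]$. For necessity, if $f(\alpha^\star)\geq 0$ for some $\alpha^\star\in(0,1]$, then concavity together with $f(0)=0$ forces the right derivative at the origin to satisfy $f'(0^+) \geq \big(f(\alpha^\star)-f(0)\big)/\alpha^\star = f(\alpha^\star)/\alpha^\star \geq 0$; and differentiating the cumulant generating function at $\alpha=0$ gives $f'(0^+) = \mathbb{E}_P[\log h_t] - \mathbb{E}_{Q_{t-1}}[\log h_t]$, which is therefore nonnegative, as claimed.

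The only place that needs care is the regularity behind the derivative and convexity claims for $\log \mathbb{E}_{Q_{t-1}}[h_t^{\alpha}]$ near $\alpha=0$. I would make standing (and mild) assumptions that $\mathbb{E}_{Q_{t-1}}[h_t^{\alpha}]$ is finite for $\alpha$ in a neighborhood of $[0,1]$ and that $\mathbb{E}_P[\log h_t]$ and $\mathbb{E}_{Q_{t-1}}[\log h_t]$ are finite --- these are already implicit in $Z_t$ and the KL divergences being well defined --- after which interchanging differentiation and integration is a routine dominated-convergence argument. Everything else is elementary.
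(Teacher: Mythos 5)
Your proof is correct, and the reduction to the closed form $\delta^t_{KL}(h_t,\alpha_t)=\alpha_t\,\mathbb{E}_P[\log h_t]-\log\mathbb{E}_{Q_{t-1}}[h_t^{\alpha_t}]$ is exactly the paper's starting point. For sufficiency the two arguments are essentially the same inequality reached by different convexity facts: the paper applies Jensen to the concave map $x\mapsto x^{\alpha_t}$ to get $\mathbb{E}_{Q_{t-1}}[h_t^{\alpha_t}]\leq\bigl(\mathbb{E}_{Q_{t-1}}[h_t]\bigr)^{\alpha_t}$, while you get the identical bound $f(\alpha_t)\geq\alpha_t f(1)$ from convexity of the cumulant generating function $\alpha\mapsto\log\mathbb{E}_{Q_{t-1}}[h_t^{\alpha}]$ together with $f(0)=0$; these buy the same thing. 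The necessity step is where you genuinely diverge: the paper simply applies Jensen again at the given $\alpha_t$, namely $\log\mathbb{E}_{Q_{t-1}}[h_t^{\alpha_t}]\geq\mathbb{E}_{Q_{t-1}}[\log h_t^{\alpha_t}]=\alpha_t\,\mathbb{E}_{Q_{t-1}}[\log h_t]$, which yields $0\leq\alpha_t\bigl[\mathbb{E}_P[\log h_t]-\mathbb{E}_{Q_{t-1}}[\log h_t]\bigr]$ and finishes after dividing by $\alpha_t>0$, with no differentiation and hence no interchange-of-limits issue. You instead read the necessary condition off as nonnegativity of the one-sided derivative $f'(0^{+})$, which is a nice unifying picture (both conditions become statements about the concave function $f$ at the endpoints of $[0,1]$) but is technically heavier: it needs the dominated-convergence argument you flag to identify $f'(0^{+})$ with $\mathbb{E}_P[\log h_t]-\mathbb{E}_{Q_{t-1}}[\log h_t]$, a caveat you could remove entirely by running Jensen at $\alpha^{\star}$ as the paper does.
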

\begin{proof}
In Appendix~\ref{proof:KL_red}.
\end{proof}

In contrast to additive boosting, the conditions above compare expectations under the true distribution with expectations under the {\em model distribution} in the previous round, $Q_{t-1}$. The equality in the conditions holds for $\alpha_t=0$, which corresponds to the trivial case where the current intermediate model is ignored in Eq.~\eqref{eq:q_t}. For other valid $\alpha_t$, the non-degenerate version of the sufficient inequality guarantees progress towards the true data distribution. Note that the intermediate models increase the overall capacity of the ensemble at every round. As we shall demonstrate later, we find models fit using multiplicative boosting to outperform their additive counterparts empirically suggesting the conditions in Theorem~\ref{thm:KL_red} are easier to fulfill in practice.

From the necessary condition, we see that a ``good" intermediate model $h_t$  assigns a better-or-equal log-likelihood under the true distribution as opposed to the model distribution, $Q_{t-1}$. This condition suggests two learning algorithms for intermediate models which we discuss next.

\section{Boosted generative models}\label{sec:algo}
In this section, we design and analyze meta-algorithms for multiplicative boosting of generative models. Given any base model which permits (approximate) likelihood evaluation, we provide a mechanism for boosting this model using an ensemble of generative and/or discriminative models.

\begin{algorithm}[t]
   \caption{GenBGM($X = \{\mathbf{x}_i\}_{i=1}^m, T$ rounds)}
   \label{alg:genbgm}
\begin{algorithmic}
\State Initialize $d_0(\mathbf{x}_i) = \nicefrac{1}{m}$ for all $ i = {1,2, \ldots, m}$.
\State Obtain base generative model $h_0$.
\State Set (unnormalized) density estimate  $\tilde{q}_0 = h_0$
   \For{$t = 1, 2,\ldots, T$}
   \State - Choose $\beta_t$ and update $d_{t}$ using Eq.~\eqref{eq:reweight_distribution}.
   \State - Train generative model $h_t$ to maximize Eq.~\eqref{eq:genbgm_obj}.
	\State - Choose $\alpha_t$.
	\State - Set density estimate $\tilde{q}_t = h_t^{\alpha_t} \cdot \tilde{q}_{t-1}$.
   \EndFor 
   \State Estimate $Z_T=\int \tilde{q}_T \;\mathrm{d}\mathbf{x} $. \\
   \Return $q_T = \tilde{q}_T/Z_T$.
\end{algorithmic}
\end{algorithm}

\subsection{Generative boosting}
Supervised boosting algorithms such as AdaBoost typically involve a reweighting procedure for training weak learners~\cite{freund1995desicion}. We can similarly train an ensemble of generative models for unsupervised boosting, where every subsequent model performs MLE w.r.t a reweighted data distribution $D_t$:
\begin{align}
\max_{h_t}\mathbb{E}_{D_t}[\log h_t] \label{eq:genbgm_obj}\\
\text{where }d_t \propto \left(\frac{p}{q_{t-1}}\right)^{\beta_t} \label{eq:reweight_distribution}
\end{align}
and $\beta_t \in [0, 1]$ is the reweighting coefficient at round $t$. Note that these coefficients are in general different from the model weights $\alpha_t$ that appear in  Eq.~\eqref{eq:q_t}. 

\begin{proposition}\label{thm:genbgm_reweight}
If we can maximize the objective in Eq.~\eqref{eq:genbgm_obj} optimally, then $\delta_{KL}^t(h_t, \alpha_t) \geq 0$ for any $\beta_t \in [0, 1]$
with the equality holding for $\beta_t=0$.
\end{proposition}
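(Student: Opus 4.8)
The plan is to reduce the statement to the \emph{sufficient} condition of Theorem~\ref{thm:KL_red}: once I show that the optimal $h_t$ satisfies $\mathbb{E}_P[\log h_t] \geq \log \mathbb{E}_{Q_{t-1}}[h_t]$, Theorem~\ref{thm:KL_red} immediately gives $\delta_{KL}^t(h_t,\alpha_t)\geq 0$ for every $\alpha_t\in[0,1]$. So the whole argument has two parts: (i) write down the maximizer of Eq.~\eqref{eq:genbgm_obj}, and (ii) plug it into that inequality.

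For part (i), I would use the identity
\begin{align*}
\mathbb{E}_{D_t}[\log h_t] = \mathbb{E}_{D_t}[\log d_t] - D_{KL}(D_t \Vert H_t),
\end{align*}
in which the first term is independent of $h_t$. Hence the maximum over all densities is attained exactly when $D_{KL}(D_t\Vert H_t)=0$, i.e. $h_t=d_t$. Writing $Z_{\beta_t}=\int (p/q_{t-1})^{\beta_t}\,\mathrm{d}\mathbf{x}$ for the normalizer in Eq.~\eqref{eq:reweight_distribution}, the optimal intermediate model is therefore $h_t = (p/q_{t-1})^{\beta_t}/Z_{\beta_t}$. This is the only place the hypothesis ``we can maximize the objective optimally'' enters.

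For part (ii), substitute this $h_t$. Since $\log h_t = \beta_t\big(\log p - \log q_{t-1}\big) - \log Z_{\beta_t}$, the left-hand side of the sufficient condition equals $\beta_t\, D_{KL}(P\Vert Q_{t-1}) - \log Z_{\beta_t}$, while $\mathbb{E}_{Q_{t-1}}[h_t] = Z_{\beta_t}^{-1}\int p^{\beta_t}q_{t-1}^{1-\beta_t}\,\mathrm{d}\mathbf{x}$, so the $\log Z_{\beta_t}$ terms cancel and the sufficient condition becomes
\begin{align*}
\beta_t\, D_{KL}(P\Vert Q_{t-1}) \;\geq\; \log \int p^{\beta_t}q_{t-1}^{1-\beta_t}\,\mathrm{d}\mathbf{x}.
\end{align*}
I would then bound the two sides separately. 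The left side is $\geq 0$ because $\beta_t\geq 0$ and KL divergence is nonnegative. For the right side, rewrite $\int p^{\beta_t}q_{t-1}^{1-\beta_t}\,\mathrm{d}\mathbf{x} = \mathbb{E}_P\big[(q_{t-1}/p)^{1-\beta_t}\big]$ and apply Jensen's inequality to the map $u\mapsto u^{1-\beta_t}$, which is concave for $\beta_t\in[0,1]$; together with $\mathbb{E}_P[q_{t-1}/p]=\int q_{t-1}\,\mathrm{d}\mathbf{x}=1$ this yields $\mathbb{E}_P[(q_{t-1}/p)^{1-\beta_t}]\leq 1$, so the right side is $\leq 0$. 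Chaining the two bounds proves the inequality, and Theorem~\ref{thm:KL_red} closes the first assertion.

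Finally, the equality at $\beta_t=0$: then $d_t$ is the uniform reference distribution, so the optimal $h_t$ is constant, $\tilde q_t = h_t^{\alpha_t}\tilde q_{t-1}$ is a constant multiple of $\tilde q_{t-1}$, and after renormalization $q_t = q_{t-1}$; hence $\delta_{KL}^t(h_t,\alpha_t)=0$ (equivalently, both sides of the displayed inequality vanish). I do not expect a deep obstacle here — the substantive content is just the identification of the optimizer plus the Jensen bound. The one thing to handle carefully is well-definedness: finiteness of $Z_{\beta_t}$, the existence of the uniform reference at $\beta_t=0$, and absolute continuity of $P$ with respect to $Q_{t-1}$ so that the density ratios and the Jensen step make sense. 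These are exactly the regularity assumptions already in force in the paper (and are automatic in the discrete or bounded-domain setting the reweighting implicitly assumes), so under them the sketch above is essentially the complete proof.
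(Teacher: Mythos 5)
Your proof is correct, but it takes a different route from the paper's. The paper's own proof fixes $\alpha_t=1$, substitutes the optimizer $h_t \propto (p/q_{t-1})^{\beta_t}$ directly into Eq.~\eqref{eq:q_t} to get $q_t \propto p^{\beta_t} q_{t-1}^{1-\beta_t}$, bounds the partition function via the AM--GM inequality ($\log \int p^{\beta_t} q_{t-1}^{1-\beta_t}\,\mathrm{d}\mathbf{x} \leq 0$), and then computes $D_{KL}(P \Vert Q_t) = (1-\beta_t)\,D_{KL}(P \Vert Q_{t-1}) + \log Z_{q_t} \leq (1-\beta_t)\,D_{KL}(P \Vert Q_{t-1})$, which gives not just $\delta^t_{KL} \geq 0$ but a quantitative contraction factor $(1-\beta_t)$ — the fact underlying the paper's remark about the computational--statistical trade-off in $\beta_t$. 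You instead verify the sufficient condition of Theorem~\ref{thm:KL_red}, $\mathbb{E}_P[\log h_t] \geq \log \mathbb{E}_{Q_{t-1}}[h_t]$, for the optimizer, using Jensen's inequality on $u \mapsto u^{1-\beta_t}$ to establish the same key fact $\int p^{\beta_t} q_{t-1}^{1-\beta_t}\,\mathrm{d}\mathbf{x} \leq 1$ (your Jensen step and the paper's AM--GM step are essentially equivalent). What your route buys is a conclusion valid for every $\alpha_t \in [0,1]$, which arguably matches the proposition's statement (where $\alpha_t$ is left free) better than the paper's $\alpha_t=1$ computation; what it loses is the explicit $(1-\beta_t)$ rate. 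Two small points: your explicit normalization $Z_{\beta_t} = \int (p/q_{t-1})^{\beta_t}\mathrm{d}\mathbf{x}$ can be infinite on an unbounded domain (e.g.\ near $\beta_t=0$), but since the sufficient condition is invariant to rescaling $h_t$, you could simply work with the unnormalized $h_t \propto (p/q_{t-1})^{\beta_t}$ as the paper does and drop that concern; and your rewriting $\int p^{\beta_t} q_{t-1}^{1-\beta_t}\,\mathrm{d}\mathbf{x} = \mathbb{E}_P[(q_{t-1}/p)^{1-\beta_t}]$ silently discards the set $\{p=0\}$, which is harmless for $\beta_t>0$ and irrelevant at $\beta_t=0$ since you treat that case separately — both are within the level of rigor the paper itself adopts.
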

\begin{proof}
In Appendix~\ref{proof:genbgm_reweight}.
\end{proof}

While the objective in Eq.~\eqref{eq:genbgm_obj} can be hard to optimize in practice, the target distribution becomes easier to approximate as we reduce the reweighting coefficient. For the extreme case of $\beta_t=0$, the reweighted data distribution is simply uniform. There is no free lunch however, since a low $\beta_t$ results in a slower reduction in KL-divergence leading to a computational-statistical trade-off. 

The pseudocode for the corresponding boosting meta-algorithm, referred to as GenBGM, is given in Algorithm~\ref{alg:genbgm}. In practice, we only observe samples from the true data distribution, and hence, approximate $p$ based on the empirical data distribution which is defined to be uniform over the dataset $X$. At every subsequent round, GenBGM learns an intermediate model that maximizes the log-likelihood of data sampled from a reweighted data distribution. 

\begin{algorithm}[t]
   \caption{DiscBGM($X = \{\mathbf{x}_i\}_{i=1}^m, T$ rounds, $f$-div)}
   \label{alg:discbgm}
\begin{algorithmic}
\State Initialize $d_0(\mathbf{x}_i) = \nicefrac{1}{m}$ for all $ i = {1,2, \ldots, m}$.
\State Obtain base generative model $h_0$.
\State Set (unnormalized) density estimate  $\tilde{q}_0 = h_0$
   \For{$t = 1, 2, \ldots, T$}
   \State - Generate negative samples from $q_{t-1}$
   \State - Optimize $r_t$ to maximize RHS in
   Eq.~\eqref{eq:f_disc_obj}.
   \State - Set $h_t = \left[f'\right]^{-1} (r_t)$.
  \State - Choose $\alpha_t$.
   \State - Set density estimate $\tilde{q}_t =  h_t^{\alpha_t} \cdot \tilde{q}_{t-1}$.
   \EndFor 
   \State Estimate $Z_T=\int \tilde{q}_T\; \mathrm{d}\mathbf{x} $. \\
   \Return $q_T = \tilde{q}_T/Z_T$.
\end{algorithmic}
\end{algorithm}

\subsection{Discriminative boosting}

A base generative model can be boosted using a discriminative approach as well. Here, the intermediate model is specified as the density ratio obtained from a binary classifier. Consider the following setup: we observe an equal number of samples drawn i.i.d. from the true data distribution (w.l.o.g. assigned the label $y=+1$) and the model distribution in the previous round $Q_{t-1}$ (label $y=0$). 

\begin{definition}
Let $f: \mathbb{R}^{+}\rightarrow \mathbb{R}$ be any convex, lower semi-continuous function satisfying $f(1) = 0$. 
The $f$-divergence between $P$ and $Q$ is defined as, $D_f(P \Vert Q) = \int q \cdot f\left(\nicefrac{p}{q}\right) \mathrm{d} \mathbf{x}$.

\end{definition}
Notable examples include the Kullback-Liebler (KL) divergence, Hellinger distance, and the Jenson-Shannon (JS) divergence among many others. The binary classifier in discriminative boosting maximizes a variational lower bound on any $f$-divergence at round $t$:
\begin{align}\label{eq:f_disc_obj}
D_f\left(P \Vert Q_{t-1}\right) \geq \sup_{r_t\in \mathcal{R}_t} \left (\mathbb{E}_P[r_t] - \mathbb{E}_{Q_{t-1}}[f^\star(r_t)]\right). 
\end{align}
 where $f^\star$ denotes the Fenchel conjugate of $f$ and $r_t:\mathbb{R}^d \rightarrow \mathrm{dom}_{f^\star}$ parameterizes the classifier. Under mild conditions on $f$~\cite{nguyen2010estimating}, the lower bound in Eq.~\eqref{eq:f_disc_obj} is tight if
$r_t^\star = f'\left( \nicefrac{p}{q_{t-1}}\right)$.
 
Hence, a solution to Eq.~\eqref{eq:f_disc_obj} can be used to estimate density ratios. The density ratios naturally fit into the multiplicative boosting framework and provide a justification for the use of objectives of the form Eq.~\eqref{eq:f_disc_obj} for learning intermediate models as formalized in the proposition below.
\begin{proposition}\label{thm:f_optimal}
For any given $f$-divergence, let $r_t^\star$ denote the optimal solution to Eq.~\eqref{eq:f_disc_obj} in the $t^{th}$ round of boosting. Then, the model density at the end of the boosting round matches the true density if we set $\alpha_t=1$ and 
$h_t = \left[f'\right]^{-1} (r_t^\star)$
where $\left[f'\right]^{-1}$ denotes the inverse of the derivative of $f$.
\end{proposition}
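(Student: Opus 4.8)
The plan is to reduce the claim to essentially a one-line substitution, using the tightness characterization of the variational bound stated immediately above: under the mild regularity hypotheses on $f$ borrowed from \cite{nguyen2010estimating} (in particular, $f$ differentiable and strictly convex on the interior of its domain, so that $f'$ is strictly increasing and hence invertible, with the range of $f'$ containing the image of the density ratio $p/q_{t-1}$), the supremum in Eq.~\eqref{eq:f_disc_obj} is attained at $r_t^\star = f'(p/q_{t-1})$.

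First I would apply $[f']^{-1}$ to this optimizer: by construction $h_t = [f']^{-1}(r_t^\star) = [f']^{-1}\!\big(f'(p/q_{t-1})\big) = p/q_{t-1}$, which is well-defined $P$-almost everywhere provided $q_{t-1}$ has full support (guaranteed whenever the base model $h_0$ does, since the unnormalized updates only multiply densities). Next I would substitute $h_t = p/q_{t-1}$ and $\alpha_t = 1$ into the recursive update Eq.~\eqref{eq:q_t}, giving $\tilde{q}_t = h_t^{\alpha_t}\,\tilde{q}_{t-1} = (p/q_{t-1})\,\tilde{q}_{t-1}$. Writing $q_{t-1} = \tilde{q}_{t-1}/Z_{t-1}$, the $\tilde{q}_{t-1}$ factors cancel to leave $\tilde{q}_t = Z_{t-1}\,p$. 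Integrating both sides yields $Z_t = \int \tilde{q}_t\,\mathrm{d}\mathbf{x} = Z_{t-1}\int p\,\mathrm{d}\mathbf{x} = Z_{t-1}$, so the normalized estimate is $q_t = \tilde{q}_t / Z_t = p$, which is the claim; in particular $D_{KL}(P\Vert Q_t)=0$, consistent with the sufficient condition of Theorem~\ref{thm:KL_red}.

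The main — in fact essentially the only — obstacle is making the invertibility of $f'$ and the well-definedness of $p/q_{t-1}$ rigorous rather than merely formal: I would spell out precisely which properties of $f$ are invoked (strict convexity for injectivity of $f'$; a domain/range condition ensuring $[f']^{-1}$ is defined at $r_t^\star$ wherever it is needed), cite the tightness result for the $f$-divergence variational representation, and dispose of the measure-zero set where $q_{t-1}$ vanishes by phrasing the conclusion as equality of densities $P$-almost everywhere, which is all that is required for the divergence objective of Eq.~\eqref{eq:MLE_objective} to vanish.
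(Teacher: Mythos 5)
Your proposal is correct and follows essentially the same route as the paper's proof: invoke the tightness result $r_t^\star = f'\!\left(\nicefrac{p}{q_{t-1}}\right)$, invert $f'$ to recover $h_t = \nicefrac{p}{q_{t-1}}$, and substitute into the recursion Eq.~\eqref{eq:q_t} with $\alpha_t = 1$ to conclude $q_t = p$. Your additional care with the normalization constants and the regularity/support conditions only makes explicit what the paper's terse argument leaves implicit.
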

\begin{proof}
In Appendix~\ref{proof:f_optimal}.
\end{proof}

The pseudocode for the corresponding meta-algorithm, DiscBGM is given in Algorithm~\ref{alg:discbgm}. At every round, we train a binary classifier to optimize the objective in Eq.~\eqref{eq:f_disc_obj} for a chosen $f$-divergence.  
As a special case, the negative of the cross-entropy loss commonly used for binary classification is also a lower bound on an $f$-divergence. While Algorithm~\ref{alg:discbgm} is applicable for any $f$-divergence, we will focus on cross-entropy henceforth to streamline the discussion.

\begin{corollary}\label{thm:bayes_optimal}
Consider the (negative) cross-entropy objective maximized by a binary classifier:
\begin{align}\label{eq:disc_obj}
   \sup_{c_t\in \mathcal{C}_t} \mathbb{E}_{P}[\log c_t] + \mathbb{E}_{Q_{t-1}}[\log(1-c_t)]. 
\end{align}
If a binary classifier $c_t$ trained to optimize Eq.~\eqref{eq:disc_obj} is Bayes optimal, then the model density after round $t$ matches the true density if we set $\alpha_t=1$ and 
$h_t= \nicefrac{c_t}{1-c_t}$.
\end{corollary}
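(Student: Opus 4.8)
The plan is to obtain this as a special case of Proposition~\ref{thm:f_optimal}, though the cleanest self-contained route is a direct pointwise optimization of the cross-entropy objective. The connection to Proposition~\ref{thm:f_optimal} is as follows: substituting $r_t = \log c_t$ turns the negative cross-entropy of Eq.~\eqref{eq:disc_obj} into exactly $\mathbb{E}_P[r_t] - \mathbb{E}_{Q_{t-1}}[f^\star(r_t)]$ with $f^\star(u) = -\log(1-e^u)$ on $\mathrm{dom}_{f^\star} = \{u<0\}$, i.e. the variational bound of Eq.~\eqref{eq:f_disc_obj} for the $f$-divergence equal (up to the additive constant $\log 4$) to twice the Jensen--Shannon divergence, whose $f$ satisfies $f'(v) = \log\frac{v}{v+1}$ and hence $[f']^{-1}(u) = \frac{e^u}{1-e^u}$. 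Checking $[f']^{-1}(r_t^\star) = \frac{e^{r_t^\star}}{1-e^{r_t^\star}} = \frac{c_t^\star}{1-c_t^\star}$ then recovers the stated $h_t$ from Proposition~\ref{thm:f_optimal}. I would, however, argue directly, since this avoids carrying the additive constant through.

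First I would characterize the Bayes-optimal classifier. Since the positive and negative samples are observed in equal proportion, the objective in Eq.~\eqref{eq:disc_obj} is $\int \big(p(\mathbf{x})\log c_t(\mathbf{x}) + q_{t-1}(\mathbf{x})\log(1-c_t(\mathbf{x}))\big)\,\mathrm{d}\mathbf{x}$, which decouples across $\mathbf{x}$; maximizing the integrand $a\log c + b\log(1-c)$ over $c\in(0,1)$ yields $c = a/(a+b)$, so the Bayes-optimal classifier is $c_t(\mathbf{x}) = \frac{p(\mathbf{x})}{p(\mathbf{x})+q_{t-1}(\mathbf{x})}$. Second, I would compute $h_t = \frac{c_t}{1-c_t} = \frac{p}{q_{t-1}}$. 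Third, I would substitute into the multiplicative recursion Eq.~\eqref{eq:q_t} with $\alpha_t = 1$: writing $q_{t-1} = \tilde q_{t-1}/Z_{t-1}$ with $Z_{t-1} = \int \tilde q_{t-1}\,\mathrm{d}\mathbf{x}$, we get
\[
\tilde q_t = h_t^{\alpha_t}\,\tilde q_{t-1} = \frac{p}{q_{t-1}}\,\tilde q_{t-1} = Z_{t-1}\,p .
\]
Finally, renormalizing, $Z_t = \int \tilde q_t\,\mathrm{d}\mathbf{x} = Z_{t-1}$, so $q_t = \tilde q_t / Z_t = p$, which is exactly the claim.

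There is no deep obstacle here, as this is a corollary, but a few points need care. One must state precisely what ``Bayes optimal'' means, namely the classifier attaining the supremum in Eq.~\eqref{eq:disc_obj}, equivalently the posterior $P(y=+1 \mid \mathbf{x})$ under balanced class priors; with unbalanced classes an extra prior-odds factor would appear in $h_t$. The normalization bookkeeping is where sign or constant errors are most likely to creep in: one must distinguish the unnormalized $\tilde q_{t-1}$ that enters the recursion from the normalized $q_{t-1}$ that defines the negative-sample distribution, and be explicit that the $p/q_{t-1}$ factor precisely cancels the $Z_{t-1}$ reintroduced by renormalization. If instead one takes the Proposition~\ref{thm:f_optimal} route, the remaining nuisance is verifying that the $\log 4$ gap between Eq.~\eqref{eq:disc_obj} and the tight $f$-divergence bound does not move the optimizer $r_t^\star$, and that $[f']^{-1}$ for this Jensen--Shannon-type $f$ coincides with the odds map $c \mapsto c/(1-c)$ after the change of variables $r_t = \log c_t$.
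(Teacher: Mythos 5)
Your proof is correct and follows essentially the same route as the paper: show that the (balanced-class) Bayes optimal classifier satisfies $c_t/(1-c_t)=p/q_{t-1}$ and substitute into Eq.~\eqref{eq:q_t} with $\alpha_t=1$ to get $q_t=p$. The only cosmetic differences are that the paper obtains the optimal classifier via Bayes' rule on the joint sampling distribution (proving the slightly more general unbalanced case with a prior-odds factor $\gamma=k/m$, which you correctly flag as a caveat), whereas you obtain it by pointwise maximization of the cross-entropy objective and track the normalization constants explicitly.
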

\begin{proof}
In Appendix~\ref{proof:bayes_optimal}.
\end{proof}

In practice, a classifier with limited capacity trained on a finite dataset will not generally be Bayes optimal. The above corollary, however, suggests that a good classifier can provide a `direction of improvement', in a similar spirit to gradient boosting for supervised learning~\cite{freund1995desicion}. Additionally, if the intermediate model distribution $h_t$ obtained using the above corollary satisfies the conditions in Theorem~\ref{thm:KL_red}, it is guaranteed to improve the fit. 

The weights $\alpha_t\in [0,1]$ can be interpreted as our confidence in the classification estimates, akin to the step size used in gradient descent. 
While in practice we heuristically assign weights to the intermediate models, the greedy optimum value of these weights at every round is a critical point for $\delta^t_{KL}$ (defined in Theorem~\ref{thm:KL_red}). For example, in the extreme case where $c_t$ is uninformative, \ie, $c_t \equiv 0.5$, then $\delta^t_{KL}(h_t, \alpha_t)=0$ for all $\alpha_t\in [0,1] $. If $c_t$ is Bayes optimal, then $\delta^t_{KL}$ attains a maxima when $\alpha_t=1$ (Corollary~\ref{thm:bayes_optimal}).

\begin{figure}[t]
\centering
\begin{subfigure}[b]{0.45\columnwidth}
\includegraphics[width=\textwidth]{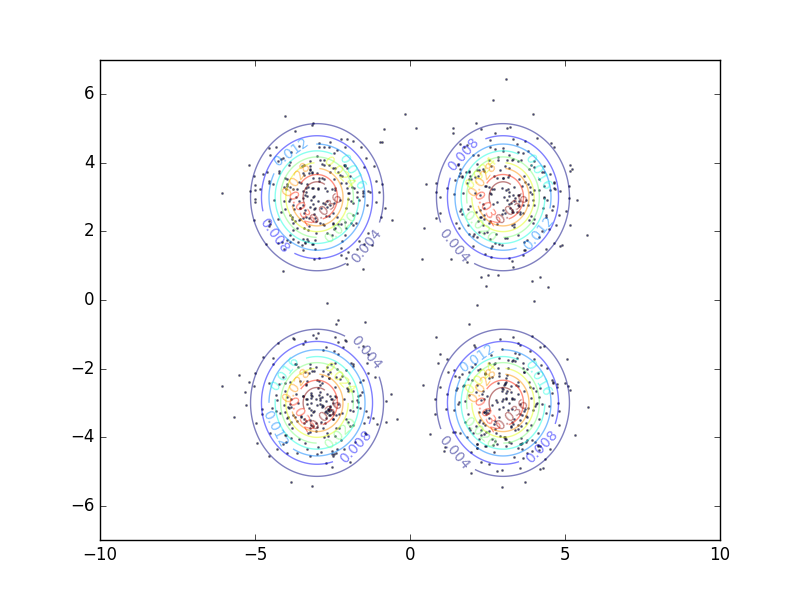}
\caption{Target}\label{fig:mog_setup_target}
\end{subfigure}
\begin{subfigure}[b]{0.45\columnwidth}
\includegraphics[width=\textwidth]{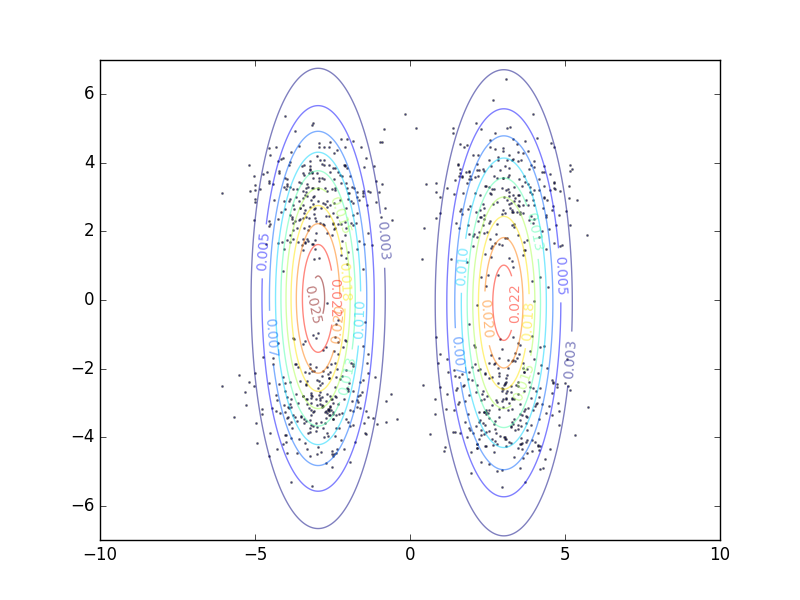}
\caption{Base model}\label{fig:mog_setup_base}
\end{subfigure}
\caption{The mixture of Gaussians setup showing (a) true density and (b) base (misspecified) model.}\label{fig:mog_setup}
\end{figure}

\subsection{Hybrid boosting}
Intermediate models need not be exclusively generators or discriminators; we can design a boosting ensemble with any combination of generators and discriminators.
If an intermediate model is chosen to be a generator, we learn a generative model using MLE after appropriately reweighting the data points. If a discriminator is used to implicitly specify an intermediate model, we set up a binary classification problem.

\subsection{Regularization}
In practice, we want boosted generative models (BGM) to generalize to data outside the training set $X$. Regularization in BGMs is imposed primarily in two ways. First, every intermediate model can be independently regularized by incorporating explicit terms in the learning objective, early stopping based on validation error, heuristics such as dropout, etc. 
Moreover, restricting the number of rounds of boosting is another effective mechanism for regularizing BGMs. Fewer rounds of boosting are required if the intermediate models are sufficiently expressive.

\section{Empirical evaluation}\label{sec:exp}
Our experiments are designed to demonstrate the superiority of the proposed boosting meta-algorithms on a wide variety of generative models and tasks. 
A reference implementation of the boosting meta-algorithms is available at \texttt{https://github.com/ermongroup/bgm}. Additional implementation details for the experiments below are given in Appendix~\ref{app:exp}.

\begin{figure*}[ht]
\centering
\begin{subfigure}[b]{0.22\textwidth}
\includegraphics[width=\textwidth]{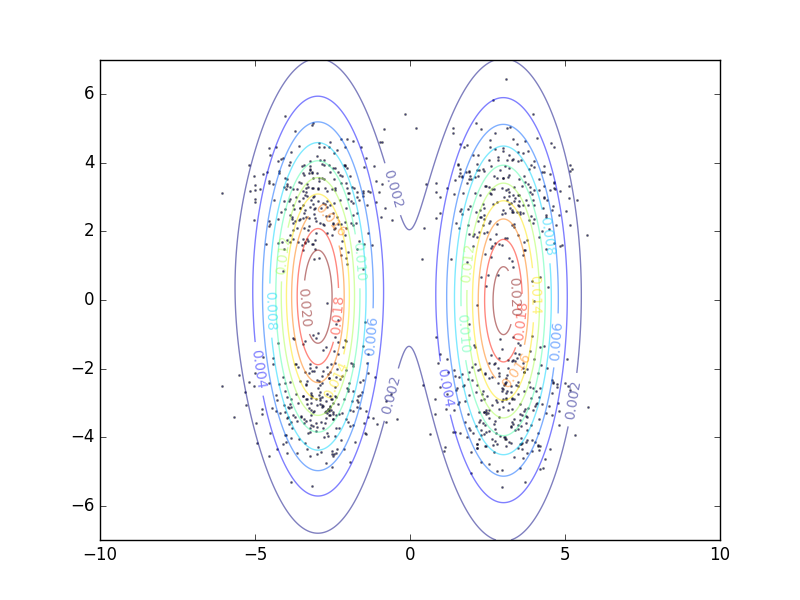}
\caption{Add model $(1)$}
\end{subfigure}
~
\begin{subfigure}[b]{0.22\textwidth}
\includegraphics[width=\textwidth]{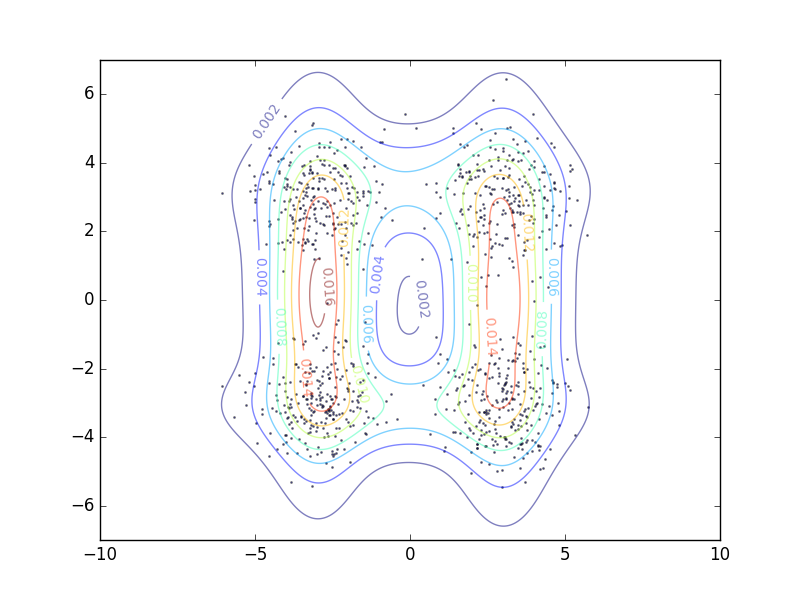}
\caption{Add model $(2)$}
\end{subfigure}
~
\begin{subfigure}[b]{0.22\textwidth}
\includegraphics[width=\textwidth]{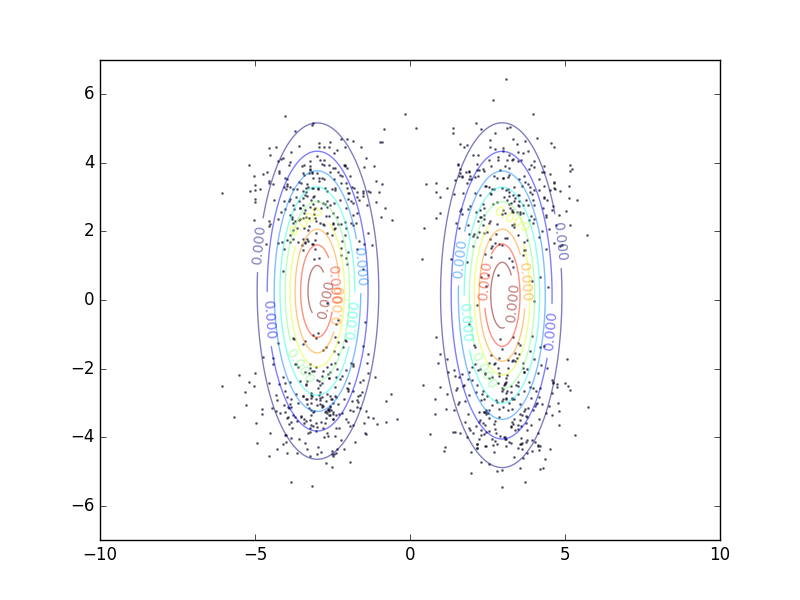}
\caption{GenBGM $(1)$}
\end{subfigure}
~
\begin{subfigure}[b]{0.22\textwidth}
\includegraphics[width=\textwidth]{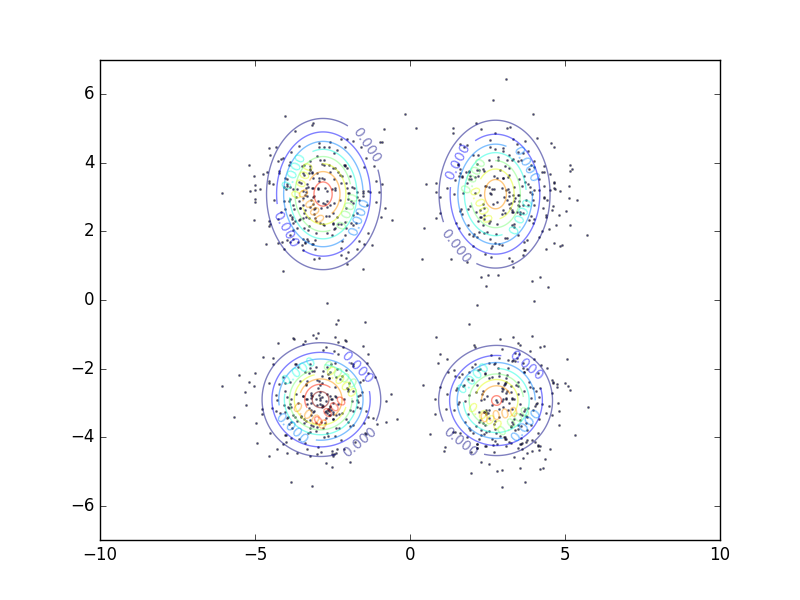}
\caption{GenBGM $(2)$}
\end{subfigure}

\begin{subfigure}[b]{0.22\textwidth}
\includegraphics[width=\textwidth]{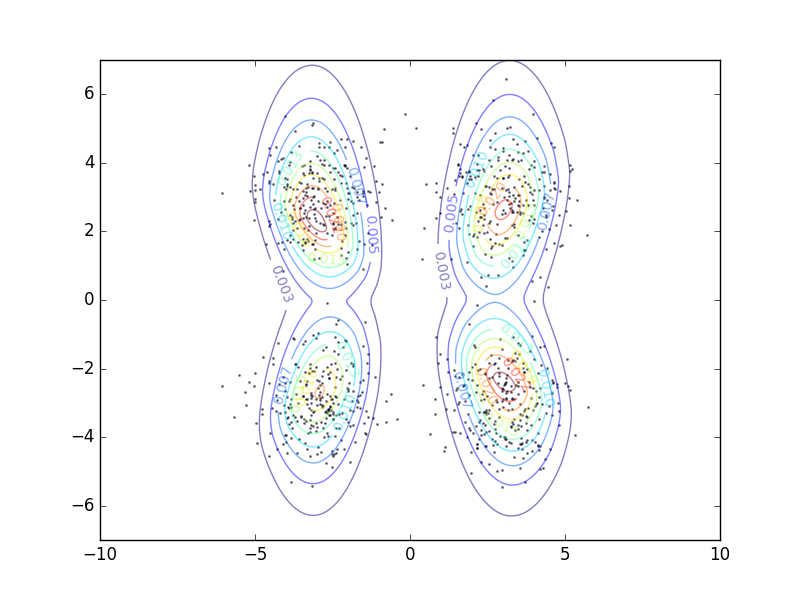}
\caption{DiscBGM-NCE $(1)$}
\end{subfigure}
~
\begin{subfigure}[b]{0.22\textwidth}
\includegraphics[width=\textwidth]{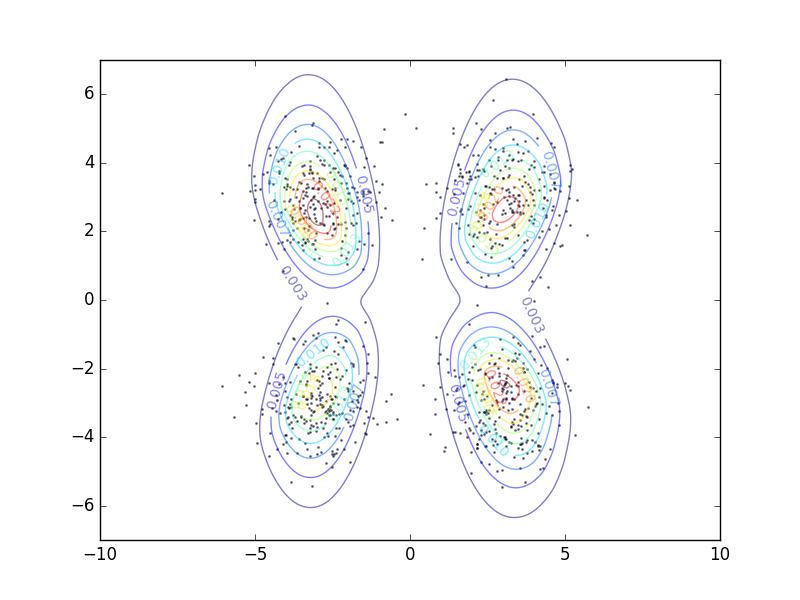}
\caption{DiscBGM-NCE $(2)$}
\end{subfigure}
~
\begin{subfigure}[b]{0.22\textwidth}
\includegraphics[width=\textwidth]{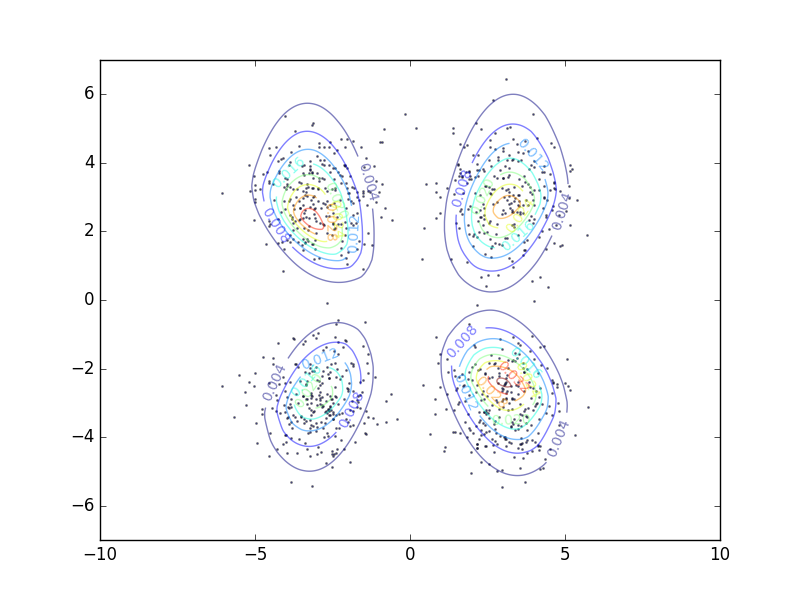}
\caption{DiscBGM-HD $(1)$}
\end{subfigure}
~
\begin{subfigure}[b]{0.22\textwidth}
\includegraphics[width=\textwidth]{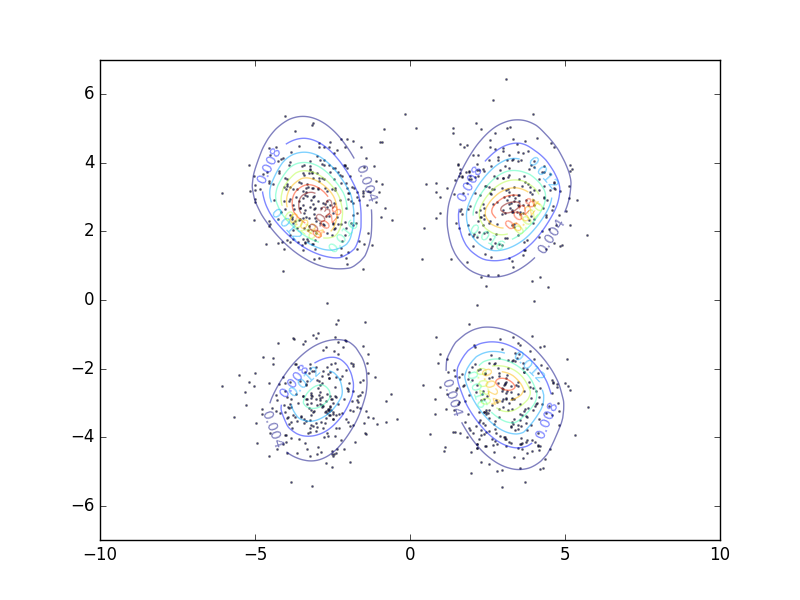}
\caption{DiscBGM-HD $(2)$}
\end{subfigure}

\caption{Multiplicative boosting algorithms such as GenBGM (c-d) and DiscBGM with negative cross-entropy (e-f) and Hellinger distance (g-h) outperform additive boosting (a-b) in correcting for model misspecification. Numbers in parenthesis indicate boosting round $t$.}\label{fig:mog_density_estimation}
\end{figure*}

\begin{table}[t]
  \centering
  \caption{Average test NLL for mixture of Gaussians.}\label{tab:mog_density_estimation}
\begin{tabular}{lc}
 \Tstrut Model  &  NLL (in nats, with std. error) \\ \hline
 \Tstrut Base model  &  $4.69 \pm 0.01$  \\
 \Tstrut Add model  &  $4.64 \pm 0.02$  \\
 \Tstrut GenBGM  &   $4.58 \pm 0.10$  \\ 
 \Tstrut DiscBGM-NCE &   $4.42 \pm 0.01$ \\ 
 \Tstrut DiscBGM-HD    &  $\mathbf{4.35 \pm 0.01}$ \\
\end{tabular}
\end{table}

\subsection{Multiplicative vs. additive boosting}
A common pitfall with learning parameteric generative models is model misspecification with respect to the true underlying data distribution. For a quantitative and qualitative understanding of the behavior of additive and multiplicative boosting, we begin by considering a synthetic setting for density estimation on a mixture of Gaussians.

\paragraph{Density estimation on synthetic dataset.}\label{sec:mog}
The true data distribution is a equi-weighted mixture of four Gaussians centered symmetrically around the origin, each having an identity covariance matrix. The contours of the underlying density are shown in Figure~\ref{fig:mog_setup_target}. We observe $1,000$ training samples drawn independently from the data distribution (shown as black dots in Figure~\ref{fig:mog_density_estimation}), and the task is to learn this distribution. The test set contains $1,000$ samples from the same distribution. We repeat 
the process $10$ times for statistical significance.

As a base (misspecified) model, we fit a mixture of two Gaussians to the data; the contours for an example instance are shown in Figure~\ref{fig:mog_setup_base}. We  compare multiplicative and additive boosting, each run for $T=2$ rounds.
For additive boosting (Add), we extend the algorithm proposed by \citeauthor{rosset2002boosting}~\shortcite{rosset2002boosting} setting $\hat{\alpha}_0$ to unity and doing a line search over $\hat{\alpha}_1, \hat{\alpha}_2 \in [0, 1]$. 
For Add and GenBGM, the intermediate models are mixtures of two Gaussians as well. 
The classifiers for DiscBGM are multi-layer perceptrons with two hidden layers of 100 units each and ReLU activations, trained to maximize $f$-divergences corresponding to the negative cross-entropy (NCE) and Hellinger distance (HD) using the Adam optimizer~\cite{kingma-iclr2014}. 

The test negative log-likelihood (NLL) estimates are listed in Table~\ref{tab:mog_density_estimation}. Qualitatively, the contour plots for the estimated densities after every boosting round on a sample instance are shown in Figure~\ref{fig:mog_density_estimation}. Multiplicative boosting algorithms outperform additive boosting in correcting for model misspecification. GenBGM initially leans towards maximizing coverage, whereas both versions of DiscBGM are relatively more conservative in assigning high densities to data points away from the modes.

\paragraph{Heuristic model weighting strategies.}
The multiplicative boosting algorithms require as hyperparameters the number of rounds of boosting and weights assigned to the intermediate models. For any practical setting, these hyperparameters are specific to the dataset and task under consideration and should be set based on cross-validation. While automatically setting model weights is an important direction for future work, we propose some heuristic weighting strategies.  Specifically, the \textit{unity} heuristic assigns a weight of $1$ to every model in the ensemble, the \textit{uniform} heuristic assigns a weight of $1/(T+1)$ to every model, and the \textit{decay} heuristic assigns as a weight of $1/2^t$ to the $t^{th}$ model in the ensemble.

\begin{figure}[t]
\centering
\begin{subfigure}[b]{0.48\columnwidth}
\includegraphics[width=\textwidth]{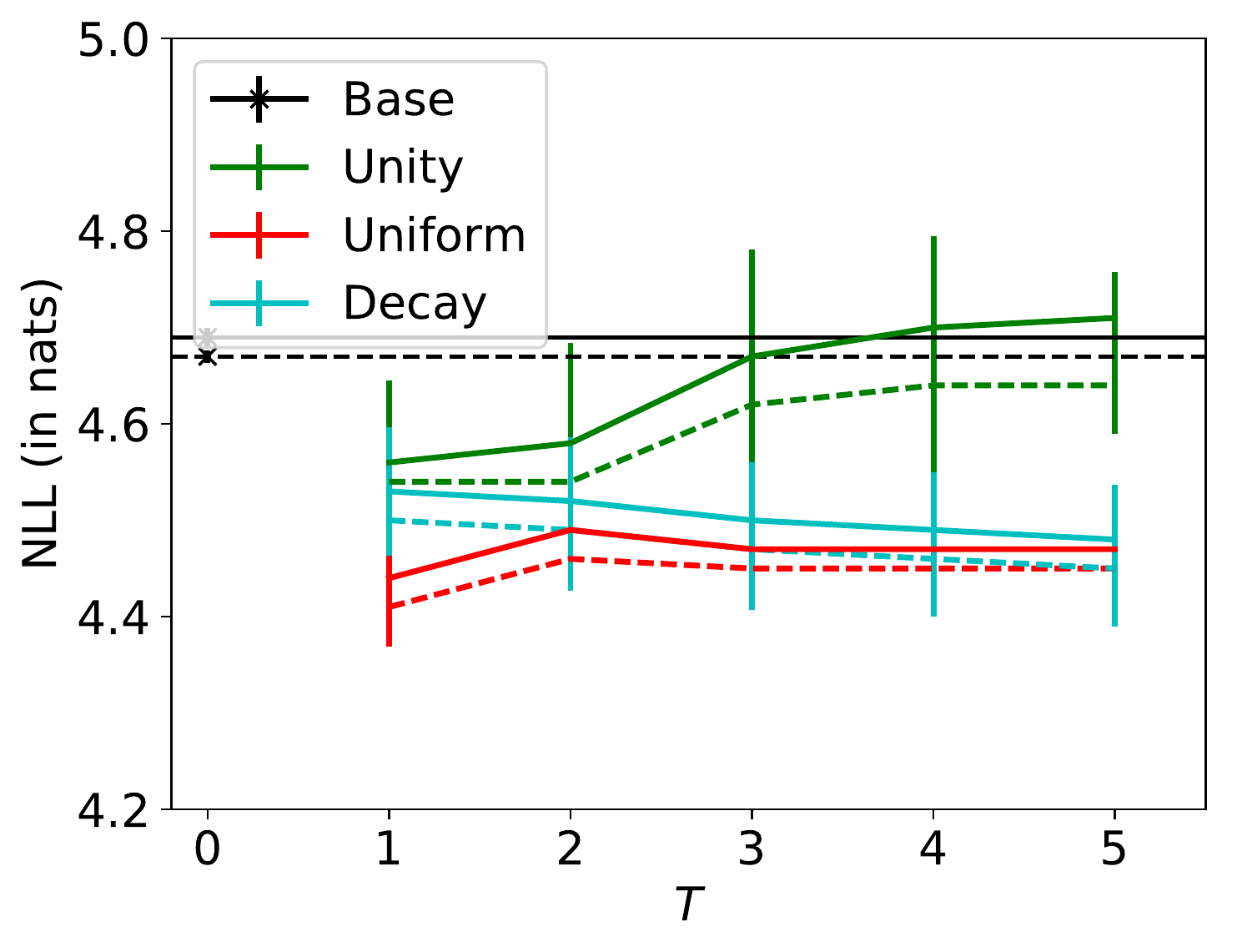}
\caption{GenBGM}
\end{subfigure}
~
\begin{subfigure}[b]{0.48\columnwidth}
\includegraphics[width=\textwidth]{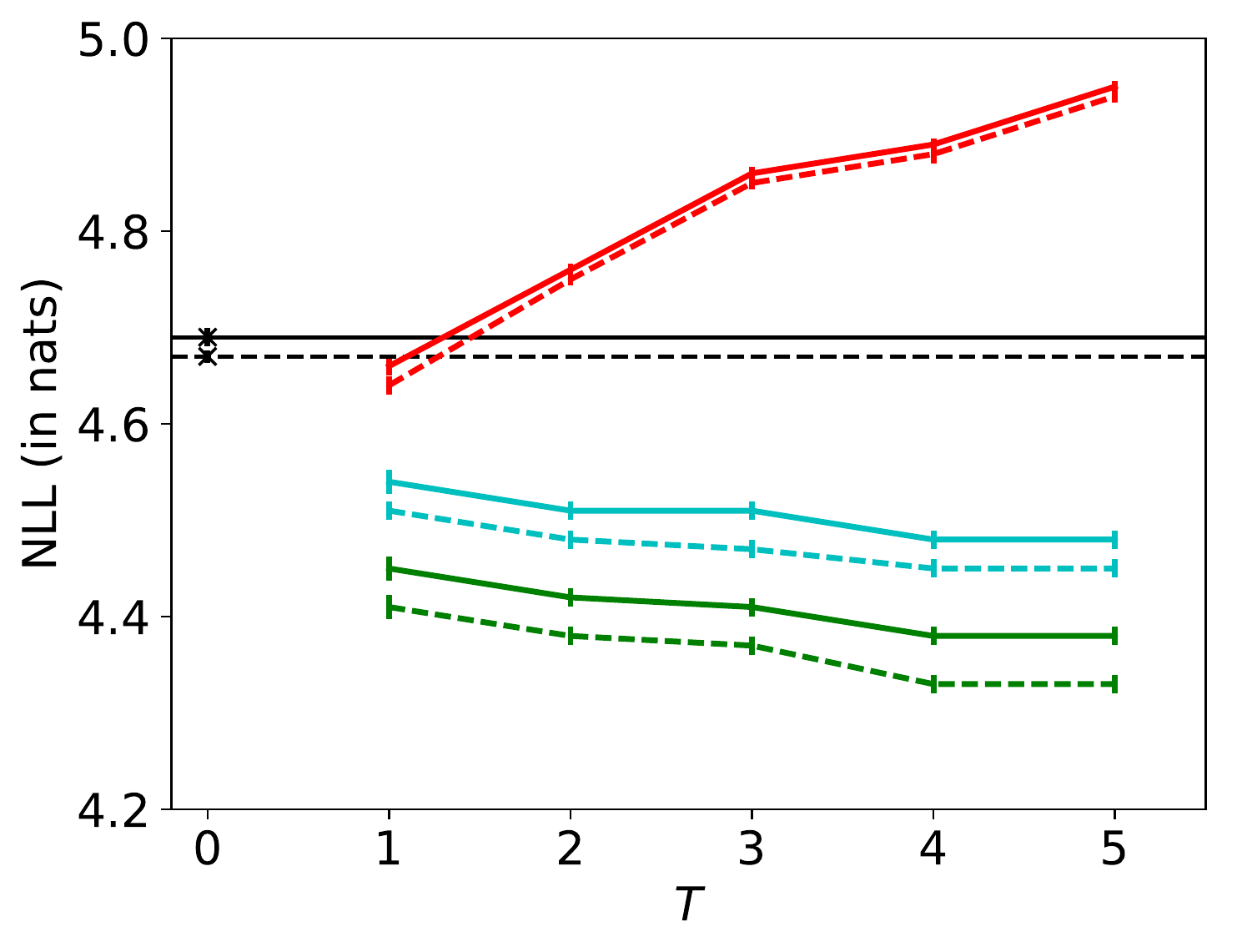}
\caption{DiscBGM-NCE}
\end{subfigure}
\caption{Train (dashed curves) and test (bold curves) NLL (in nats) for weighting heuristics on mixture of Gaussians. $T$ is the number of rounds of boosting. The base model is shown as a black cross at $T=0$.}
\label{fig:heuristics}
\end{figure}

In Figure~\ref{fig:heuristics}, we observe that the performance of the algorithms is sensitive to the weighting strategies. In particular, DiscBGM produces worse estimates as $T$ increases for the ``uniform" (red) strategy. 
The performance of GenBGM also degrades slightly with increasing $T$ for the ``unity'' (green) strategy. 
Notably, the ``decay'' (cyan) strategy achieves stable performance for both the algorithms.
Intuitively, this heuristic follows the rationale of reducing the step size in gradient based stochastic optimization algorithms, and we expect this strategy to work better even in other settings. However, this strategy could potentially result in slower convergence as opposed to the unity strategy.

\begin{table*}[t]
\caption{Experimental results for density estimation. Negative log-likelihoods reported in nats. Lower is better with best performing models in \textbf{bold}. Overall, multiplicative boosting outperforms additive boosting and baseline models specified as Mixture of Bernoullis (MoB, \textbf{middle} columns) and Sum Product Networks (SPN, \textbf{right} columns).}
\label{tab:binary_density_estimation}
\centering\resizebox{0.95\textwidth}{!}{ %
\begin{tabular}{lc|cccc|cccc}
Dataset  & $\#$vars  & MoB Base  & Add  & GenBGM  & DiscBGM  & SPN Base  & Add & GenBGM  & DiscBGM\tabularnewline
\hline 
Accidents  & $111$ & $42.23$	&$43.19$ &	$41.43$&	\textbf{34.51} &31.08&	29.92	&29.55	&\textbf{28.09} \tabularnewline
Retail  & $135$ & 11.27	&12.24	&11.20	&\textbf{10.91} &14.94&	11.27&	11.21	&\textbf{10.88} \tabularnewline
Pumsbstar  & $163$  &55.67&	55.91	&50.66&	\textbf{34.93}& 26.70&	25.00	&25.00&	\textbf{23.69}\tabularnewline
DNA  & $180$  &99.42	&100.37	&99.23	&\textbf{98.45} &92.60&	86.93	&87.79	&\textbf{86.63}   \tabularnewline
Kosarek  & $190$ &11.72	&12.57	&12.41	&\textbf{11.13} &12.71	&10.97	&10.73	&\textbf{10.67 } \tabularnewline
Ad  & $1556$ &63.13	&63.73	&63.19	&\textbf{54.79} &19.19	&18.12	&18.14	&\textbf{17.82}\tabularnewline
\end{tabular}
} 
\end{table*}
\begin{table*}[ht]
\caption{Experimental results for classification. Prediction accuracy for predicting one variable given the rest. Higher is better with best performing models in \textbf{bold}. Multiplicative boosting again outperforms additive boosting and baseline models specified as Mixture of Bernoullis (MoB, \textbf{middle} columns) and Sum Product Networks (SPN, \textbf{right} columns).}
\label{tab:binary_classification}
\centering\resizebox{0.95\textwidth}{!}{ 
\begin{tabular}{lc|cccc|cccc}
Dataset  & $\#$test  & MoB Base  & Add  & GenBGM  & DiscBGM  & SPN Base  & Add & GenBGM  & DiscBGM\tabularnewline
\hline 
Accidents  & 283,161  &0.8395&	0.8393	&0.8473&	\textbf{0.9043}& 0.9258&	0.9266&	0.9298&	\textbf{0.9416}\tabularnewline
Retail  & 595,080 & 0.9776&	0.9776&	0.9776	&\textbf{0.9792} &0.9780	&0.9790	&0.9789	&\textbf{0.9791} \tabularnewline
Pumsb-star  & 399,676 &0.8461&	0.8501	&0.8819	&\textbf{0.9267}& 0.9599	&0.9610	&0.9611	&\textbf{0.9636} \tabularnewline
DNA  & 213,480  &0.7517	&0.7515&	\textbf{0.7531}	&0.7526& 0.7799&	0.7817&	\textbf{0.7828}	&0.7811   \tabularnewline
Kosarek  & 1,268,250 &0.9817	&0.9816&	0.9818	&\textbf{0.9831}& 0.9824&	\textbf{0.9838}&	\textbf{0.9838}&	\textbf{0.9838}  \tabularnewline
Ad  & 763,996& 0.9922&	0.9923	&0.9818	&\textbf{0.9927}& \textbf{0.9982}&	0.9981&	\textbf{0.9982}&	\textbf{0.9982} \tabularnewline
\end{tabular}
} 
\end{table*}

\paragraph{Density estimation on benchmark datasets.} We now evaluate the performance of additive and multiplicative boosting for density estimation on real-world benchmark datasets~\cite{van2012markov}. We consider two generative model families: mixture of Bernoullis (MoB) and sum-product networks~\cite{poon2011sum}. While our results for multiplicative boosting with sum-product networks (SPN) are competitive with the state-of-the-art, the goal of these experiments is to perform a robust comparison of boosting algorithms as well as demonstrate their applicability to various model families.

We set $T=2$ rounds for additive boosting and GenBGM. Since DiscBGM requires samples from the model density at every round, we set $T=1$ to ensure computational fairness such that the samples can be obtained efficiently from the base model sidestepping running expensive Markov chains. Model weights are chosen based on cross-validation. 
The results on density estimation are reported in Table~\ref{tab:binary_density_estimation}.
Since multiplicative boosting estimates are unnormalized, we use importance sampling to estimate the partition function. 

When the base model is MoB, the Add model underperforms and is often worse than even the baseline model for the best performing validated non-zero model weights. GenBGM consistently outperforms Add and improves over the baseline model in a most cases (4/6 datasets). DiscBGM performs the best and convincingly outperforms the baseline, Add, and  GenBGM on all datasets. For results on SPNs, the boosted models all outperform the baseline. GenBGM again edges out Add models (4/6 datasets), whereas DiscBGM models outperform all other models on all datasets.  These results demonstrate the usefulness of boosted expressive model families, especially the DiscBGM approach, which performs the best, while GenBGM is preferable to Add.

\subsection{Applications of generative models}

\paragraph{Classification.} Here, we evaluate the performance of boosting algorithms for classification. Since the datasets above do not have any explicit labels, we choose one of the dimensions to be the label (say $y$). Letting $\mathbf{x}_{\bar{y}}$ denote the remaining dimensions, we can obtain a prediction for $y$ as, 
\[p(y=1\vert \mathbf{x}_{\bar{y}})= \frac{p(y=1, \mathbf{x}_{\bar{y}})}{p(y=1, \mathbf{x}_{\bar{y}}) + p(y=0, \mathbf{x}_{\bar{y}})}\]
which is efficient to compute even for unnormalized models. We repeat the above procedure for all the variables predicting one variable at a time using the values assigned to the remaining variables. The results are reported in Table~\ref{tab:binary_classification}. When the base model is a MoB, we observe that the Add approach could often be worse than the base model whereas GenBGM performs slightly better than the baseline (4/6 datasets). The DiscBGM approach consistently performs well, and is only outperformed by GenBGM for two datasets for MoB. When SPNs are used instead, both Add and GenBGM improve upon the baseline model while DiscBGM again is the best performing model on all but one dataset.

\begin{figure*}[t]
\centering
\begin{subfigure}[b]{0.24\textwidth}
\centering
\includegraphics[width=\textwidth]{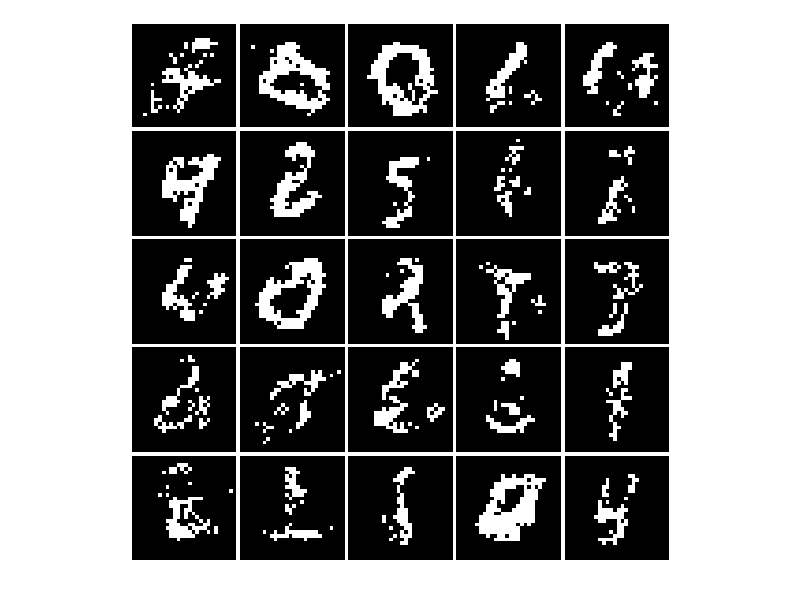}
\caption{Base VAE \\
(200-100)}\label{fig:mnist_sampling_base}
\end{subfigure}
\begin{subfigure}[b]{0.24\textwidth}
\centering
\includegraphics[width=\textwidth]{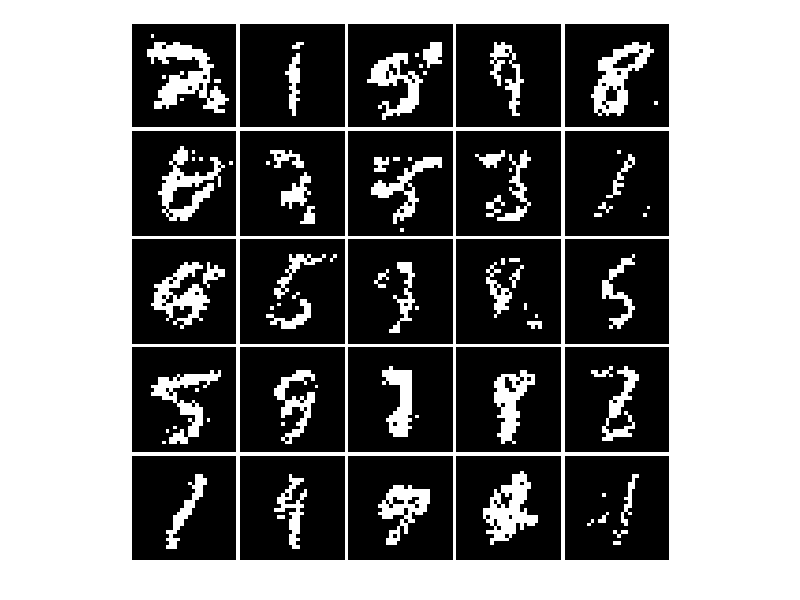}
\caption{ Base + depth \\
(200-100-100)
}\label{fig:mnist_sampling_base_depth}
\end{subfigure}
\begin{subfigure}[b]{0.24\textwidth}
\centering
\includegraphics[width=\textwidth]{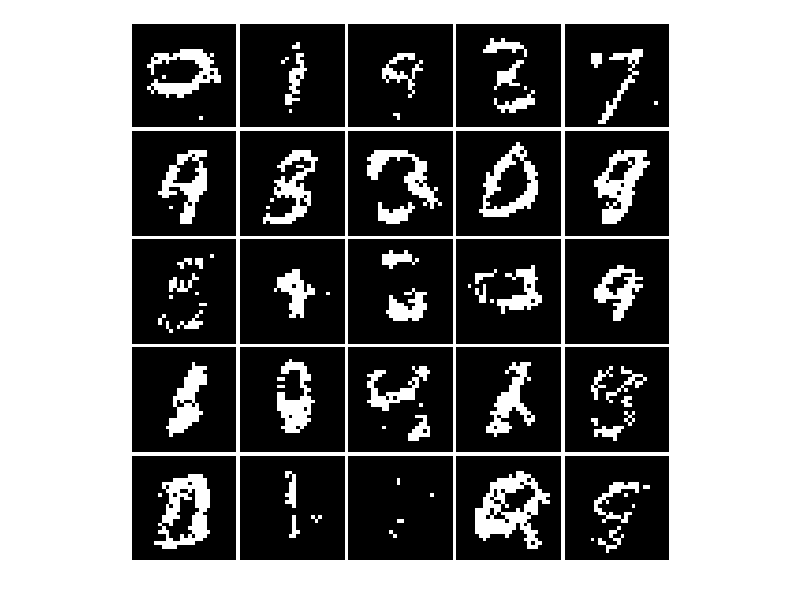}
\caption{Base + width \\ (300-100)}\label{fig:mnist_sampling_base_width}
\end{subfigure}
\begin{subfigure}[b]{0.24\textwidth}
\centering
\includegraphics[width=\textwidth]{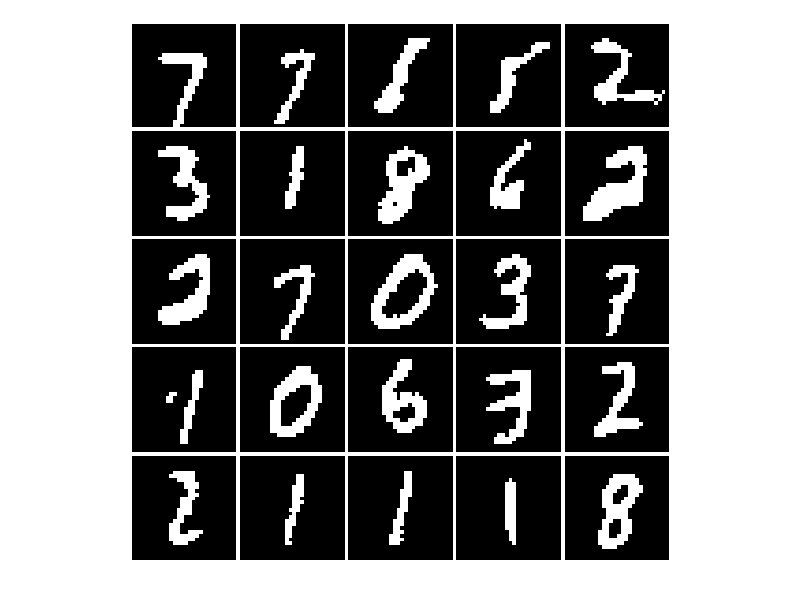}
\caption{GenDiscBGM \\
(100-50)}\label{fig:mnist_sampling_bgm}
\end{subfigure}

\caption{The boosted model (d) demonstrates how ensembles of weak learners can generate sharper samples, compared to naively increasing model capacity (a-c). Note that we show samples of binarized digits and not mean values for the pixels. VAE hidden layer architecture given in parenthesis. }\label{fig:mnist_sampling}
\end{figure*}

\paragraph{Sample generation.} 
We compare boosting algorithms based on their ability to generate image samples for the binarized MNIST dataset of handwritten digits~\cite{lecun2010mnist}. We use variational autoencoders (VAE) as the base model~\cite{kingma-iclr2014}. While any sufficiently expressive VAE can generate impressive examples, we design the experiment to evaluate the model complexity approximated as the number of learnable parameters.

Ancestral samples obtained by the baseline VAE model are shown in Figure~\ref{fig:mnist_sampling_base}. We use the evidence lower bound (ELBO) as a proxy for approximately evaluating the marginal log-likelihood during learning. The conventional approach to improving the performance of a latent variable model is to increase its representational capacity by adding hidden layers (Base + depth) or increasing the number of hidden units in the existing layers (Base + width). These lead to a marginal improvement in sample quality as seen in Figure~\ref{fig:mnist_sampling_base_depth} and Figure~\ref{fig:mnist_sampling_base_width}. 

In contrast,  boosting makes steady improvements in sample quality. We start with a VAE with much fewer parameters and generate samples using a hybrid boosting GenDiscBGM sequence VAE$\rightarrow$CNN$\rightarrow$VAE (Figure~\ref{fig:mnist_sampling_bgm}) . The discriminator used is a convolutional neural network (CNN)~\cite{lecun1995convolutional} trained to maximize the negative cross-entropy. We then generate samples using independent Markov chain Monte Carlo (MCMC) runs.
The boosted sequences generate sharper samples than all baselines in spite of having similar model capacity. 

\section{Discussion and related work}\label{sec:rel}
In this work, we revisited boosting, a class of meta-algorithms developed in response to a seminal
question: \textit{Can a set of weak learners create a single strong learner?}
Boosting has offered interesting theoretical insights into the fundamental limits of supervised learning and led to the development of algorithms that work well in practice ~\cite{schapire1990strength,freund1999short,friedman2002stochastic,caruana2006empirical}. 
Our work provides a foundational framework for unsupervised boosting with connections to prior work discussed below.

\paragraph{Sum-of-experts.}
\citeauthor{rosset2002boosting}~\shortcite{rosset2002boosting} proposed an algorithm for density estimation using Bayesian networks similar to gradient boosting.
These models are normalized and easy to sample, but are generally outperformed by multiplicative formulations for correcting for model misspecification, as we show in this work. Similar additive approaches have been used for improving approximate posteriors for specific algorithms for variational inference~\cite{guo2016boosting,miller2016variational} and generative adversarial networks~\cite{tolstikhin2017adagan}. For a survey on variations of additive ensembling for unsupervised settings, refer to the survey by~\citeauthor{bourel2012aggregating}~\shortcite{bourel2012aggregating}. 

\paragraph{Product-of-experts.}
Our multiplicative boosting formulation can be interpreted as a product-of-experts approach, which was initially proposed for feature learning in energy based models such as Boltzmann machines. For example, the hidden units in a restricted Boltzmann machine can be interpreted as weak learners performing MLE.
If the number of weak learners is fixed, they can be efficiently updated in parallel but there is a risk of learning redundant features
~\cite{hinton1999products,hinton2002training}. 
Weak learners can also be added incrementally based on the learner's ability to distinguish observed data and model-generated data~\cite{welling2002self}. \citeauthor{tu2007learning}~\shortcite{tu2007learning} generalized the latter to boost arbitrary probabilistic models; their algorithm is a special case of DiscBGM  with all $\alpha$'s set to 1 and the discriminator itself a boosted classifier. DiscBGM additionally accounts for imperfections in learning classifiers through flexible model weights. Further, it can include any classifier trained to maximize any $f$-divergence. 

Related techniques such as noise-contrastive estimation, ratio
matching, and score matching methods can be cast as minimization of Bregman divergences, akin to DiscBGM with unit model weights~\cite{gutmann2012bregman}. A non-parametric algorithm similar to GenBGM was proposed by \citeauthor{di2004boosting}~\shortcite{di2004boosting} where an ensemble of weighted kernel density estimates are learned to approximate the data distribution. In contrast, our framework allows for both parametric and non-parametric learners and uses a different scheme for reweighting data points than proposed in the above work.

\paragraph{Unsupervised-as-supervised learning.}
The use of density ratios learned by a binary classifier for estimation was first proposed by~\citeauthor{friedman2001elements}~\shortcite{friedman2001elements} and has been subsequently applied elsewhere, notably for parameter estimation using noise-contrastive estimation~\cite{gutmann2010noise} and sample generation in generative adversarial networks (GAN)~\cite{goodfellow2014generative}. While GANs consist of a discriminator distinguishing real data from model generated data similar to DiscBGM for a suitable $f$-divergence, they differ in the learning objective for the generator~\cite{nowozin2016f}. The generator of a GAN performs an adversarial minimization of the same objective the discriminator maximizes, whereas DiscBGM uses the likelihood estimate of the base generator (learned using MLE) and the density ratios derived from the discriminator(s) to estimate the model density for the ensemble.

\paragraph{Limitations and future work.} In the multiplicative boosting framework, the model density needs to be specified only up to a normalization constant at any given round of boosting. Additionally, while many applications of generative modeling such as feature learning and classification can sidestep computing the partition function, if needed it can be estimated using techniques such as Annealed Importance Sampling~\cite{neal2001annealed}. Similarly, Markov chain Monte Carlo methods can be used to generate samples. The lack of implicit normalization can however be limiting for applications requiring fast log-likelihood evaluation and sampling. 

In order to sidestep this issue, a promising direction for future work is to consider boosting of normalizing flow models~\cite{dinh2014nice,dinh2016density,grover2017flow}. These models specify an invertible multiplicative transformation from one  distribution to another using the change-of-variables formula such that the resulting distribution is self-normalized and efficient ancestral sampling is possible. 
The GenBGM algorithm can be adapted to normalizing flow models whereby every transformation is interpreted as a weak learner. The parameters for every transformation can be trained greedily after suitable reweighting resulting in a self-normalized boosted generative model.

\section{Conclusion}\label{sec:conc}
We presented a general-purpose framework for boosting generative models by explicit factorization of the model likelihood as a product of simpler intermediate model densities. These intermediate models are learned greedily using discriminative or generative approaches, gradually increasing the overall model's capacity. 
We demonstrated the effectiveness of these models over baseline models and additive boosting for the tasks of density estimation, classification, and sample generation. 
Extensions to semi-supervised learning~\cite{kingma2014semi} and structured prediction~\cite{sohn2015learning} are
exciting directions for future work.

\section*{Acknowledgements}
We are thankful to Neal Jean, Daniel Levy, and Russell Stewart for helpful critique. This research was supported by a Microsoft Research PhD fellowship in machine learning for the first author, NSF grants $\#1651565$, $\#1522054$, $\#1733686$, a Future of Life Institute grant, and Intel.
\fontsize{9pt}{10pt} \selectfont
\bibliographystyle{aaai}
\bibliography{refs}

\begin{thebibliography}{}

\bibitem[\protect\citeauthoryear{Bourel and
  Ghattas}{2012}]{bourel2012aggregating}
Bourel, M., and Ghattas, B.
\newblock 2012.
\newblock Aggregating density estimators: {A}n empirical study.
\newblock {\em arXiv preprint arXiv:1207.4959}.

\bibitem[\protect\citeauthoryear{Caruana and
  Niculescu-Mizil}{2006}]{caruana2006empirical}
Caruana, R., and Niculescu-Mizil, A.
\newblock 2006.
\newblock An empirical comparison of supervised learning algorithms.
\newblock In {\em International Conference on Machine Learning}.

\bibitem[\protect\citeauthoryear{Di~Marzio and Taylor}{2004}]{di2004boosting}
Di~Marzio, M., and Taylor, C.~C.
\newblock 2004.
\newblock Boosting kernel density estimates: A bias reduction technique?
\newblock {\em Biometrika} 91(1):226--233.

\bibitem[\protect\citeauthoryear{Dinh, Krueger, and
  Bengio}{2014}]{dinh2014nice}
Dinh, L.; Krueger, D.; and Bengio, Y.
\newblock 2014.
\newblock {NICE}: {N}on-linear independent components estimation.
\newblock {\em arXiv preprint arXiv:1410.8516}.

\bibitem[\protect\citeauthoryear{Dinh, Sohl-Dickstein, and
  Bengio}{2017}]{dinh2016density}
Dinh, L.; Sohl-Dickstein, J.; and Bengio, S.
\newblock 2017.
\newblock Density estimation using {R}eal {NVP}.
\newblock In {\em International Conference on Learning Representations}.

\bibitem[\protect\citeauthoryear{Freund and
  Schapire}{1995}]{freund1995desicion}
Freund, Y., and Schapire, R.~E.
\newblock 1995.
\newblock A desicion-theoretic generalization of on-line learning and an
  application to boosting.
\newblock In {\em European conference on computational learning theory}.

\bibitem[\protect\citeauthoryear{Freund, Schapire, and
  Abe}{1999}]{freund1999short}
Freund, Y.; Schapire, R.; and Abe, N.
\newblock 1999.
\newblock A short introduction to boosting.
\newblock {\em Journal-Japanese Society For Artificial Intelligence}
  14(771-780):1612.

\bibitem[\protect\citeauthoryear{Friedman, Hastie, and
  Tibshirani}{2001}]{friedman2001elements}
Friedman, J.; Hastie, T.; and Tibshirani, R.
\newblock 2001.
\newblock {\em The elements of statistical learning}, volume~1.
\newblock Springer series in statistics Springer, Berlin.

\bibitem[\protect\citeauthoryear{Friedman}{2002}]{friedman2002stochastic}
Friedman, J.~H.
\newblock 2002.
\newblock Stochastic gradient boosting.
\newblock {\em Computational Statistics \& Data Analysis} 38(4):367--378.

\bibitem[\protect\citeauthoryear{Goodfellow \bgroup et al\mbox.\egroup
  }{2014}]{goodfellow2014generative}
Goodfellow, I.; Pouget-Abadie, J.; Mirza, M.; Xu, B.; Warde-Farley, D.; Ozair,
  S.; Courville, A.; and Bengio, Y.
\newblock 2014.
\newblock Generative adversarial nets.
\newblock In {\em Advances in Neural Information Processing Systems}.

\bibitem[\protect\citeauthoryear{Grover, Dhar, and
  Ermon}{2018}]{grover2017flow}
Grover, A.; Dhar, M.; and Ermon, S.
\newblock 2018.
\newblock Flow-{GAN}: Combining maximum likelihood and adversarial learning in
  generative models.
\newblock In {\em AAAI Conference on Artificial Intelligence}.

\bibitem[\protect\citeauthoryear{Guo \bgroup et al\mbox.\egroup
  }{2016}]{guo2016boosting}
Guo, F.; Wang, X.; Fan, K.; Broderick, T.; and Dunson, D.~B.
\newblock 2016.
\newblock Boosting variational inference.
\newblock {\em arXiv preprint arXiv:1611.05559}.

\bibitem[\protect\citeauthoryear{Gutmann and
  Hirayama}{2011}]{gutmann2012bregman}
Gutmann, M., and Hirayama, J.-i.
\newblock 2011.
\newblock Bregman divergence as general framework to estimate unnormalized
  statistical models.
\newblock In {\em Uncertainty in Artificial Intelligence}.

\bibitem[\protect\citeauthoryear{Gutmann and
  Hyv{\"a}rinen}{2010}]{gutmann2010noise}
Gutmann, M., and Hyv{\"a}rinen, A.
\newblock 2010.
\newblock Noise-contrastive estimation: A new estimation principle for
  unnormalized statistical models.
\newblock In {\em Artificial Intelligence and Statistics}.

\bibitem[\protect\citeauthoryear{Hinton}{1999}]{hinton1999products}
Hinton, G.~E.
\newblock 1999.
\newblock Products of experts.
\newblock In {\em International Conference on Artificial Neural Networks}.

\bibitem[\protect\citeauthoryear{Hinton}{2002}]{hinton2002training}
Hinton, G.~E.
\newblock 2002.
\newblock Training products of experts by minimizing contrastive divergence.
\newblock {\em Neural computation} 14(8):1771--1800.

\bibitem[\protect\citeauthoryear{Kingma and Ba}{2015}]{kingma2014adam}
Kingma, D., and Ba, J.
\newblock 2015.
\newblock Adam: A method for stochastic optimization.
\newblock In {\em International Conference on Learning Representations}.

\bibitem[\protect\citeauthoryear{Kingma and Welling}{2014}]{kingma-iclr2014}
Kingma, D.~P., and Welling, M.
\newblock 2014.
\newblock Auto-encoding variational {B}ayes.
\newblock In {\em International Conference on Learning Representations}.

\bibitem[\protect\citeauthoryear{Kingma \bgroup et al\mbox.\egroup
  }{2014}]{kingma2014semi}
Kingma, D.~P.; Mohamed, S.; Rezende, D.~J.; and Welling, M.
\newblock 2014.
\newblock Semi-supervised learning with deep generative models.
\newblock In {\em Advances in Neural Information Processing Systems}.

\bibitem[\protect\citeauthoryear{LeCun and
  Bengio}{1995}]{lecun1995convolutional}
LeCun, Y., and Bengio, Y.
\newblock 1995.
\newblock Convolutional networks for images, speech, and time series.
\newblock {\em The handbook of brain theory and neural networks} 3361(10):1995.

\bibitem[\protect\citeauthoryear{LeCun, Cortes, and
  Burges}{2010}]{lecun2010mnist}
LeCun, Y.; Cortes, C.; and Burges, C.~J.
\newblock 2010.
\newblock {MNIST} handwritten digit database.
\newblock {\em http://yann. lecun. com/exdb/mnist}.

\bibitem[\protect\citeauthoryear{Li, Song, and Ermon}{2017}]{li2017inferring}
Li, Y.; Song, J.; and Ermon, S.
\newblock 2017.
\newblock Info{GAIL}: {I}nterpretable imitation learning from visual
  demonstrations.
\newblock In {\em Advances in Neural Information Processing Systems}.

\bibitem[\protect\citeauthoryear{Miller, Foti, and
  Adams}{2017}]{miller2016variational}
Miller, A.~C.; Foti, N.; and Adams, R.~P.
\newblock 2017.
\newblock Variational boosting: Iteratively refining posterior approximations.
\newblock In {\em International Conference on Machine Learning}.

\bibitem[\protect\citeauthoryear{Neal}{2001}]{neal2001annealed}
Neal, R.~M.
\newblock 2001.
\newblock Annealed importance sampling.
\newblock {\em Statistics and Computing} 11(2):125--139.

\bibitem[\protect\citeauthoryear{Nguyen, Wainwright, and
  Jordan}{2010}]{nguyen2010estimating}
Nguyen, X.; Wainwright, M.~J.; and Jordan, M.~I.
\newblock 2010.
\newblock Estimating divergence functionals and the likelihood ratio by convex
  risk minimization.
\newblock {\em IEEE Transactions on Information Theory} 56(11):5847--5861.

\bibitem[\protect\citeauthoryear{Nowozin, Cseke, and
  Tomioka}{2016}]{nowozin2016f}
Nowozin, S.; Cseke, B.; and Tomioka, R.
\newblock 2016.
\newblock f-{GAN}: {T}raining generative neural samplers using variational
  divergence minimization.
\newblock In {\em Advances in Neural Information Processing Systems}.

\bibitem[\protect\citeauthoryear{Oord, Kalchbrenner, and
  Kavukcuoglu}{2016}]{oord2016pixel}
Oord, A. v.~d.; Kalchbrenner, N.; and Kavukcuoglu, K.
\newblock 2016.
\newblock Pixel recurrent neural networks.
\newblock In {\em International Conference on Machine Learning}.

\bibitem[\protect\citeauthoryear{Poon and Domingos}{2011}]{poon2011sum}
Poon, H., and Domingos, P.
\newblock 2011.
\newblock Sum-product networks: A new deep architecture.
\newblock In {\em Uncertainty in Artificial Intelligence}.

\bibitem[\protect\citeauthoryear{Rosset and Segal}{2002}]{rosset2002boosting}
Rosset, S., and Segal, E.
\newblock 2002.
\newblock Boosting density estimation.
\newblock In {\em Advances in Neural Information Processing Systems}.

\bibitem[\protect\citeauthoryear{Schapire and
  Freund}{2012}]{schapire2012boosting}
Schapire, R.~E., and Freund, Y.
\newblock 2012.
\newblock {\em Boosting: Foundations and algorithms}.
\newblock MIT press.

\bibitem[\protect\citeauthoryear{Schapire}{1990}]{schapire1990strength}
Schapire, R.~E.
\newblock 1990.
\newblock The strength of weak learnability.
\newblock {\em Machine learning} 5(2):197--227.

\bibitem[\protect\citeauthoryear{Sohn, Lee, and Yan}{2015}]{sohn2015learning}
Sohn, K.; Lee, H.; and Yan, X.
\newblock 2015.
\newblock Learning structured output representation using deep conditional
  generative models.
\newblock In {\em Advances in Neural Information Processing Systems}.

\bibitem[\protect\citeauthoryear{Tolstikhin \bgroup et al\mbox.\egroup
  }{2017}]{tolstikhin2017adagan}
Tolstikhin, I.; Gelly, S.; Bousquet, O.; Simon-Gabriel, C.-J.; and
  Sch{\"o}lkopf, B.
\newblock 2017.
\newblock Ada{GAN}: Boosting generative models.
\newblock In {\em Advances in Neural Information Processing Systems}.

\bibitem[\protect\citeauthoryear{Tu}{2007}]{tu2007learning}
Tu, Z.
\newblock 2007.
\newblock Learning generative models via discriminative approaches.
\newblock In {\em Computer Vision and Pattern Recognition}.

\bibitem[\protect\citeauthoryear{Van~Haaren and Davis}{2012}]{van2012markov}
Van~Haaren, J., and Davis, J.
\newblock 2012.
\newblock Markov network structure learning: A randomized feature generation
  approach.
\newblock In {\em AAAI Conference on Artificial Intelligence}.

\bibitem[\protect\citeauthoryear{Welling, Zemel, and
  Hinton}{2002}]{welling2002self}
Welling, M.; Zemel, R.~S.; and Hinton, G.~E.
\newblock 2002.
\newblock Self supervised boosting.
\newblock In {\em Advances in Neural Information Processing Systems}.

\bibitem[\protect\citeauthoryear{Zhao, Song, and
  Ermon}{2017}]{zhao2017learning}
Zhao, S.; Song, J.; and Ermon, S.
\newblock 2017.
\newblock Learning hierarchical features from deep generative models.
\newblock In {\em International Conference on Machine Learning}.

\end{thebibliography}

\newpage
\onecolumn
\section*{Appendices}
\begin{appendices}
\section{Proofs of theoretical results}

\subsection{Theorem~\ref{thm:add_KL_red}}\label{proof:add_KL_red}
\begin{proof}

\text{The reduction in KL-divergence can be simplified as:}\\
\begin{align*}
\delta^t_{KL}(h_t, \hat{\alpha}_t) &= \mathbb{E}_P\left[\log \frac{p}{q_{t-1}}\right] - \mathbb{E}_P\left[\log \frac{p}{q_t}\right] \\
&= \mathbb{E}_P\left[\log \frac{q_t}{q_{t-1}} \right] \\
&= \mathbb{E}_P\left[\log \left[ (1-\hat{\alpha}_t) + \hat{\alpha}_t \frac{h_t}{q_{t-1}}\right]\right].&
\end{align*}

\noindent We first derive the \textbf{sufficient condition} by lower bounding 
$\delta^t_{KL}(h_t, \hat{\alpha}_t)$.
\begin{align*}
\delta^t_{KL}(h_t, \hat{\alpha}_t) &=\mathbb{E}_P\left[\log \left[ (1-\hat{\alpha}_t) + \hat{\alpha}_t \frac{h_t}{q_{t-1}}\right]\right] \\
&\geq\mathbb{E}_P\left[(1-\hat{\alpha}_t) \log 1 + \hat{\alpha}_t \log \frac{h_t}{q_{t-1}}\right] &\text{(Arithmetic Mean} \geq \text{Geometric Mean)}\\
&= \hat{\alpha}_t \mathbb{E}_P\left[\log \frac{h_t}{q_{t-1}}\right]. &\text{(Linearity of expectation)}
\end{align*}
If the lower bound is non-negative, then so is $\delta^t_{KL}(h_t, \hat{\alpha}_t)$. Hence:
\begin{align*}
\mathbb{E}_P\left[\log \frac{h_t}{q_{t-1}}\right] &\geq 0 &
\end{align*}
which is the stated sufficient condition.
\\

\noindent For the \textbf{necessary condition} to hold, we know that:
\begin{align*}
0 &\leq \delta^t_{KL}(h_t, \hat{\alpha}_t) \\
&=  \mathbb{E}_P\left[\log \left[ (1-\hat{\alpha}_t) + \hat{\alpha}_t \frac{h_t}{q_{t-1}}\right]\right] \\
&\leq \log \mathbb{E}_P\left[ (1-\hat{\alpha}_t) + \hat{\alpha}_t \frac{h_t}{q_{t-1}}\right] & \text{(Jensen's inequality)}\\
&= \log \left [ (1-\hat{\alpha}_t) + \hat{\alpha}_t \mathbb{E}_P\left[  \frac{h_t}{q_{t-1}}\right]\right] & \text{(Linearity of expectation)}
\end{align*}
Taking exponential on both sides, we get:
\begin{align*}
(1-\hat{\alpha}_t) + \hat{\alpha}_t \mathbb{E}_P\left[  \frac{h_t}{q_{t-1}}\right] &\geq 1\nonumber \\
\mathbb{E}_P\left[  \frac{h_t}{q_{t-1}}\right] &\geq 1 &
\end{align*}
which is the stated necessary condition.
\end{proof}

\subsection{Theorem~\ref{thm:KL_red}}\label{proof:KL_red}
\begin{proof}
We first derive the \textbf{sufficient condition}.
\begin{align}\label{eq:greedy_objective}
\delta^t_{KL}(h_t, \alpha_t) &= \int p \log q_t \,\mathrm{d}\mathbf{x} - \int p \log q_{t-1} \,\mathrm{d}\mathbf{x}\nonumber \\\nonumber 
&= \int p \log \frac{h_t^{\alpha_t} \cdot q_{t-1}}{Z_t} - \int p \log q_{t-1} &\text{(using Eq.~\eqref{eq:q_t})}\\ 
&= \alpha_t \cdot \mathbb{E}_P[\log h_t] - \log \mathbb{E}_{Q_{t-1}}[h_t^{\alpha_t}]\\\nonumber 
&\geq \alpha_t \cdot \mathbb{E}_P[\log h_t] - \log \mathbb{E}_{Q_{t-1}}[h_t]^{\alpha_t} & \text{(Jensen's inequality)}\\\nonumber 
&= \alpha_t \cdot \big[\mathbb{E}_P[\log h_t] - \log \mathbb{E}_{Q_{t-1}}[h_t]\big] \\\nonumber 
&\geq 0. & \text{(by assumption)}
\end{align}
Note that if $\alpha_t=1$, the sufficient condition is also necessary. 
\\

\noindent For the \textbf{necessary condition} to hold, we know that:
\begin{align*}
0 \leq \delta^t_{KL}(h_t, \alpha_t) &= \alpha_t \cdot \mathbb{E}_P[\log h_t]  - \log \mathbb{E}_{Q_{t-1}}[h_t^{\alpha_t}]\\
&\leq \alpha_t \cdot \mathbb{E}_P[\log h_t]  - \mathbb{E}_{Q_{t-1}}[\log h_t^{\alpha_t}]& \text{(Jensen's inequality)}\\
&=\alpha_t \cdot [\mathbb{E}_P[\log h_t] - \mathbb{E}_{Q_{t-1}} [\log h_t]] & \text{(Linearity of expectation)}\\
&\leq \mathbb{E}_P[\log h_t] - \mathbb{E}_{Q_{t-1}} [\log h_t]. & (\text{since } \alpha_t > 0)
\end{align*}
\end{proof}

\subsection{Proposition~\ref{thm:genbgm_reweight}}\label{proof:genbgm_reweight}
\begin{proof}
By assumption, we can optimize Eq.~\eqref{eq:genbgm_obj} to get:
\begin{align*}
h_t &\propto \left(\frac{p}{q_{t-1}}\right)^{\beta_t}&.
\end{align*}

\noindent Substituting for $h_t$ in the multiplicative boosting formulation in Eq.~\eqref{eq:q_t},:
\begin{align*}
q_t &\propto \frac{q_{t-1} \cdot h_t}{Z_{q_t}}\\
&\propto q_{t-1} \cdot \left(\frac{p}{q_{t-1}}\right)^{\beta_t}\\
&= \frac{ p^{\beta_t} \cdot  q_{t-1}^{1-\beta_t} }{Z_{q_t}}&
\end{align*}
where the partition function $Z_{q_t} = \int p^{\beta_t} \cdot  q_{t-1}^{1-\beta_t} $.
\\

\noindent In order to prove the inequality, we first obtain a lower bound on the log-partition function, $Z_{q_t}$. For any given point, we have:
\begin{align*}
 p^{\beta_t} \cdot  q_{t-1}^{1-\beta_t} &\leq \beta_t p +  (1-\beta_t) q_{t-1}. & \text{(Arithmetic Mean $\geq$ Geometric Mean)}
\end{align*}
Integrating over all points in the domain, we get: 
\begin{align}\label{eq:lower_bound_Z}
\log Z_{q_t} &\leq \log \left[\beta Z_p + (1-\beta) Z_{q_{t-1}} \right] \nonumber \\
&= 0 &
\end{align}
where we have used the fact that $p$ and $q_{t-1}$ are normalized densities. 
\\

\noindent Now, consider the following quantity:
\begin{align*}
D_{KL}(P \Vert Q_t) &= \mathbb{E}_P \left[\log \frac{p}{q_t}\right]  \\
&= \mathbb{E}_P \left[\log \frac{p}{\frac{p^{\beta_t} \cdot  q_{t-1}^{1-\beta_t}}{Z_{q_t}}}\right]  \\
&= (1-\beta_t) \mathbb{E}_P \left[\log \frac{p}{q_{t-1}}\right] + \log Z_{q_t}\\
&\leq (1-\beta_t) \mathbb{E}_P \left[\log \frac{p}{q_{t-1}}\right] & \text{(using Eq.~\eqref{eq:lower_bound_Z})} \\
&\leq \mathbb{E}_P \left[\log \frac{p}{q_{t-1}}\right] & (\text{since } \beta_t \geq 0) \\
&= D_{KL}(P \Vert Q_{t-1}).
\end{align*}
\end{proof}

\subsection{Proposition~\ref{thm:f_optimal}}\label{proof:f_optimal}
\begin{proof}
By the $f$-optimality assumption, we know that:
\begin{align*}
r_t &= f'\left(\frac{p}{q_{t-1}}\right).&
\end{align*}
Hence, $h_t = \frac{p}{q_{t-1}}$. From Eq.~\eqref{eq:q_t}, we get:
\begin{align*}
q_t &= q_{t-1} \cdot h_t^{\alpha_t} = p&
\end{align*}
finishing the proof.
\end{proof}

\subsection{Corollary~\ref{thm:bayes_optimal}}\label{proof:bayes_optimal}
\begin{proof}
Let $u_t$ denote the joint distribution over $(\mathbf{x}, y)$ at round $t$. We will prove a slightly more general result where we have $m$ positive training examples sampled from $p$ and the $k$ negative training examples sampled from $q_{t-1}$.\footnote{In the statement for Corollary~\ref{thm:bayes_optimal}, the classes are assumed to be balanced for simplicity \textit{i.e.}, $m=k$.} Hence, we can express the conditional and prior densities as:
\begin{align}
p &= u(\mathbf{x} \vert y=1) \label{eq:binary_class_1} \\
q_{t-1} &= u(\mathbf{x} \vert y=0) \label{eq:binary_class_2}\\
u(y=1) &= \frac{m}{m+k} \label{eq:prior_class_1}\\
u(y=0) &= \frac{k}{m+k} \label{eq:prior_class_2}.&
\end{align}
The Bayes optimal density $c_t$ can be expressed as:
\begin{align}
c_t &= u(y=1 \vert \mathbf{x}) 
\nonumber\\
&= u(\mathbf{x} \vert y=1)  u(y=1) / u(\mathbf{x}) \label{eq:bayes_d_1}.&
\end{align}
Similarly, we have:
\begin{align}
1-c_t &= u(\mathbf{x} \vert y=0) u(y=0) / u(\mathbf{x})
\label{eq:bayes_d_2}.&
\end{align}
From Eqs.~(\ref{eq:binary_class_1}-\ref{eq:prior_class_2}, \ref{eq:bayes_d_1}-\ref{eq:bayes_d_2}), we have:
\begin{align*}
h_t &= \gamma \cdot \frac{c_t}{1-c_t} = \frac{p}{q_{t-1}}.&
\end{align*}
where $\gamma = \frac{k}{m}$. 
\\

\noindent Finally from Eq.~\eqref{eq:q_t}, we get:
\begin{align*}
q_t &= q_{t-1} \cdot h_t^{\alpha_t} = p &
\end{align*}
finishing the proof.
\end{proof}

In Corollary~\ref{thm:adversarial_bayes_optimal} below, we present an additional theoretical result below that derives the optimal model weight, $\alpha_t$ for an \textit{adversarial Bayes optimal classifier}. 

\subsection{Corollary~\ref{thm:adversarial_bayes_optimal}}
\begin{corollary}~\label{thm:adversarial_bayes_optimal} [to Corollary~\ref{thm:bayes_optimal}]
Define an adversarial Bayes optimal classifier $c_t'$ as one that assigns the density $c_t' = 1 - c_t$ where $c_t$ is the Bayes optimal classifier.
For an adversarial Bayes optimal classifier $c_t'$, $\delta^t_{KL}$ attains a maxima of zero when $\alpha_t=0$. 
\end{corollary}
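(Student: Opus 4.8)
The plan is to trace through the effect of an adversarial Bayes optimal classifier on the multiplicative update and show the KL-divergence reduction is identically zero for all $\alpha_t$, with the maximum trivially attained at $\alpha_t=0$. First I would apply Corollary~\ref{thm:bayes_optimal}: for the Bayes optimal $c_t$ we have $c_t/(1-c_t) \propto p/q_{t-1}$, so for the adversarial classifier $c_t' = 1-c_t$ the induced intermediate model is $h_t = \gamma \cdot c_t'/(1-c_t') = \gamma \cdot (1-c_t)/c_t \propto q_{t-1}/p$. Thus the adversarial classifier yields exactly the reciprocal density ratio.

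Next I would substitute $h_t \propto q_{t-1}/p$ into the recursive multiplicative update Eq.~\eqref{eq:q_t}, $\tilde q_t = h_t^{\alpha_t}\cdot \tilde q_{t-1}$, giving $q_t \propto q_{t-1}^{1+\alpha_t} \cdot p^{-\alpha_t}$ after normalization; equivalently $q_t = q_{t-1}^{1+\alpha_t} p^{-\alpha_t} / Z_t$ with $Z_t = \int q_{t-1}^{1+\alpha_t} p^{-\alpha_t}\,\mathrm{d}\mathbf{x}$. Then I would plug this into $\delta^t_{KL}(h_t,\alpha_t) = \mathbb{E}_P[\log q_t] - \mathbb{E}_P[\log q_{t-1}]$, which expands to $(1+\alpha_t)\mathbb{E}_P[\log q_{t-1}] - \alpha_t \mathbb{E}_P[\log p] - \log Z_t - \mathbb{E}_P[\log q_{t-1}] = \alpha_t\big(\mathbb{E}_P[\log q_{t-1}] - \mathbb{E}_P[\log p]\big) - \log Z_t = -\alpha_t D_{KL}(P\Vert Q_{t-1}) - \log Z_t$.

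The crux is then to show $\log Z_t = -\alpha_t D_{KL}(P \Vert Q_{t-1})$, so the two terms cancel. I would write $Z_t = \int q_{t-1}^{1+\alpha_t} p^{-\alpha_t}\,\mathrm{d}\mathbf{x} = \int q_{t-1} (q_{t-1}/p)^{\alpha_t}\,\mathrm{d}\mathbf{x} = \mathbb{E}_{Q_{t-1}}[(q_{t-1}/p)^{\alpha_t}]$. For the Bayes optimal discriminator to be well-defined we need $p$ and $q_{t-1}$ to have the same support, so $\mathbb{E}_{Q_{t-1}}[(q_{t-1}/p)^{\alpha_t}] = \int p (q_{t-1}/p)^{1+\alpha_t}\,\mathrm{d}\mathbf{x}$; more directly, $\log Z_t = \log \mathbb{E}_{Q_{t-1}}[(q_{t-1}/p)^{\alpha_t}]$ and by Jensen applied to the concave map $x\mapsto x^{\alpha_t}$ (for $\alpha_t\in[0,1]$)\ldots but this only gives an inequality. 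The clean route is to recognize $q_t \propto q_{t-1}^{1+\alpha_t}p^{-\alpha_t}$ must itself be a normalized density, and to directly verify that $q_t = q_{t-1}$ is \emph{not} forced; instead I would observe that substituting $h_t = q_{t-1}/p$ exactly into the GenBGM-style analysis of Proposition~\ref{thm:genbgm_reweight} with $\beta_t = -\alpha_t$ reversed, or more simply: since $D_{KL}(P\Vert Q_t) = D_{KL}(P \Vert Q_{t-1}) - \delta^t_{KL} \geq 0$ and the construction drives $q_t$ \emph{away} from $p$, we expect $\delta^t_{KL}\le 0$, and the boundary case gives equality.

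The main obstacle I anticipate is pinning down $\log Z_t$ exactly rather than via an inequality. The likely resolution is that the adversarial $h_t$ moves probability mass in exactly the wrong direction, so that $\delta^t_{KL}(h_t,\alpha_t) \leq 0$ for all $\alpha_t\in[0,1]$ (by the same Jensen argument used in Theorem~\ref{thm:KL_red}, now yielding the reverse sign because $h_t$ violates the necessary condition maximally), while at $\alpha_t=0$ we have $\tilde q_0' = \tilde q_{t-1}$ unchanged, hence $\delta^t_{KL}(h_t,0)=0$ trivially. Therefore zero is the maximum over $\alpha_t\in[0,1]$ and it is attained at $\alpha_t=0$, which is exactly the claim. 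I would present the computation of $\delta^t_{KL}$ in closed form as above and then argue the sign and the location of the maximizer from that expression together with non-negativity of KL-divergence.
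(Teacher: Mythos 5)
Your final argument is correct and is essentially the paper's own proof: the adversarial classifier gives $h_t = q_{t-1}/p$, the closed form $\delta^t_{KL}(h_t,\alpha_t) = -\alpha_t D_{KL}(P\Vert Q_{t-1}) - \log \mathbb{E}_{Q_{t-1}}\left[(q_{t-1}/p)^{\alpha_t}\right]$, and Jensen's inequality (concavity of $\log$, the same step as in Theorem~\ref{thm:KL_red}) bounds this above by $-\alpha_t\left[D_{KL}(P\Vert Q_{t-1}) + D_{KL}(Q_{t-1}\Vert P)\right] \leq 0$, while $\delta^t_{KL}(h_t,0)=0$ trivially, so the maximum of zero is attained at $\alpha_t=0$. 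Just delete the intermediate ``crux'' that $\log Z_t = -\alpha_t D_{KL}(P\Vert Q_{t-1})$ so the terms cancel: that identity is false in general (Jensen gives $\log Z_t \geq \alpha_t D_{KL}(Q_{t-1}\Vert P) \geq 0$, with exact cancellation only if $P=Q_{t-1}$), and, as you yourself conclude, the claim needs only $\delta^t_{KL}\leq 0$ with equality at $\alpha_t=0$, not $\delta^t_{KL}\equiv 0$.
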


\begin{proof}
For an adversarial Bayes optimal classifier,
\begin{align}
c_t' &= u(\mathbf{x} \vert y=0) u(y=0) / u(\mathbf{x})\label{eq:adv_bayes_d_1}\\
1-c_t' &= u(\mathbf{x} \vert y=1) u(y=1) / u(\mathbf{x})\label{eq:adv_bayes_d_2}.&
\end{align}
From Eqs.~(\ref{eq:binary_class_1}-\ref{eq:prior_class_2},   \ref{eq:adv_bayes_d_1}-\ref{eq:adv_bayes_d_2}), we have:
\begin{align*}
h_t &= \gamma \cdot \frac{c_t'}{1-c_t'} = \frac{q_{t-1}}{p}.&
\end{align*}
Substituting the above intermediate model in Eq.~\eqref{eq:greedy_objective},
\begin{align*}
\delta^t_{KL}(h_t, \alpha_t) &= \alpha_t \cdot \mathbb{E}_P\left[\log \frac{q_{t-1}}{p}\right] - \log \mathbb{E}_{Q_{t-1}}\left[\frac{q_{t-1}}{p}\right]^{\alpha_t}\\
&\leq \alpha_t \cdot \mathbb{E}_P\left[\log \frac{q_{t-1}}{p}\right] - \mathbb{E}_{Q_{t-1}}\left[\alpha_t \cdot \log \frac{q_{t-1}}{p}\right] & \text{(Jensen's inequality)}\\
&= \alpha_t \cdot \left [\mathbb{E}_P\left[\log \frac{q_{t-1}}{p}\right] - \mathbb{E}_{Q_{t-1}}\left[\log \frac{q_{t-1}}{p}\right]\right] &\text{(Linearity of expectation)}\\
&= -\alpha_t \left[D_{KL}(P \parallel Q_{t-1}) + D_{KL}(Q_{t-1} \parallel P) \right]\\
&\leq 0 & (D_{KL}\text{ is non-negative)}.\\
\end{align*}
By inspection, the equality holds when $\alpha_t=0$ finishing the proof. 
\end{proof}

\section{Additional implementation details}~\label{app:exp}

\subsection{Density estimation on synthetic dataset}
\paragraph{Model weights.} For DiscBGM, all model weights, $\alpha$'s to unity. The model weights for GenBGM, $\alpha$'s are set uniformly to $1/(T+1)$ and reweighting coefficients, $\beta$'s  are set to unity.

\subsection{Density estimation on benchmark datasets}
\paragraph{Generator learning procedure details.} We use the default open source implementations of mixture of Bernoullis (MoB) and sum-product networks (SPN) as given in \texttt{https://github.com/AmazaspShumik/sklearn-bayes} and \texttt{https://github.com/KalraA/Tachyon} respectively for baseline models.

\paragraph{Discriminator learning procedure details.} The discriminator considered for these experiments is a multilayer perceptron with two hidden layers consisting of $100$ units each and ReLU activations learned using the Adam optimizer~\cite{kingma2014adam} with a learning rate of $1e-4$. The training is for $100$ epochs with a mini-batch size of $100$, and finally the model checkpoint with the best validation error during training is selected to specify the intermediate model to be added to the ensemble.

\paragraph{Model weights.} Model weights for multiplicative boosting algorithms, GenBGM and DiscBGM, are set based on best validation set performance of the heuristic weighting strategies. Partition function is estimated using importance sampling with the baseline model (MoB or SPN) as a proposal and a sample size of $1,000,000$.

\subsection{Sample generation}\label{app:mnist}

\paragraph{VAE architecture and learning procedure details.} Only the last layer in every VAE is stochastic, rest are deterministic. The inference network specifying the posterior contains the same architecture for the hidden layer as the generative network. The prior over the latent variables is standard Gaussian, the hidden layer activations are ReLU, and learning is done using Adam~\cite{kingma2014adam} with a learning rate of $10^{-3}$ and mini-batches of size $100$. 

\paragraph{CNN architecture and learning procedure details.} The CNN contains two convolutional layers and a single full connected layer with $1024$ units. Convolution layers have kernel size $5\times 5$, and $32$ and $64$ output channels, respectively. We apply ReLUs and $2\times 2$ max pooling after each convolution. The net is randomly initialized prior to training, and learning is done using the Adam~\cite{kingma2014adam} optimizer with a learning rate of $10^{-3}$ and mini-batches of size $100$.

\paragraph{Sampling procedure for BGM sequences.}
Samples from the GenDiscBGM are drawn from a Markov chain run using the Metropolis-Hastings algorithm with a discrete, uniformly random proposal and the BGM distribution as the stationary distribution for the chain. Every sample in Figure~\ref{fig:mnist_sampling} (d) is drawn from an independent Markov chain with a burn-in period of $100,000$ samples and a different start seed state. 

\end{appendices}
\end{document}